\let\Algorithm\algorithm
\renewcommand\algorithm[1][]{\Algorithm[#1]\setstretch{1.2}}
\newtheorem{theorem}{Theorem}
\newtheorem{proposition}{Proposition}
\theoremstyle{remark}
\newcommand{\blind}{1}
\date{} 
\begin{document}

\def\spacingset#1{\renewcommand{\baselinestretch}%
{#1}\small\normalsize} \spacingset{1}

%%%%%%%%%%%%%%%%%%%%%%%%%%%%%%%%%%%%%%%%%%%%%%%%%%%%%%%%%%%%%%%%%%%%%%%%%%%%%%

\if1\blind
{
  \title{\bf Who is the Winning Algorithm?\\Rank Aggregation for Comparative Studies}
  \author{Amichai Painsky\hspace{.2cm}\\
    Tel Aviv University, Israel}
  \maketitle
} \fi

\if0\blind
{
  \bigskip
  \bigskip
  \bigskip
  \begin{center}
    {\LARGE\bf Confidence Intervals for Unobserved Events}
\end{center}
  \medskip
} \fi

\bigskip

\begin{abstract}
Consider a collection of $m$ competing machine learning algorithms. Given their performance on a benchmark of datasets, we would like to identify the best performing algorithm. Specifically, which algorithm is most likely to ``win'' (rank highest) on a future, unseen dataset.  The standard maximum likelihood approach suggests counting the number of wins per each algorithm. In this work, we argue that there is much more information in the complete rankings. That is, the number of times that each algorithm finished second, third and so forth. Yet, it is not entirely clear how to  effectively utilize this information for our purpose. In this work we introduce a novel conceptual framework for estimating the win probability for each of the $m$ algorithms, given their complete rankings over a benchmark of datasets. Our proposed framework significantly improves upon currently known methods in synthetic and real-world examples. 

\end{abstract}

\noindent%
{\it Keywords:}  Comparative Study, Benchmark Datasets, Predictive Modeling,  Algorithm Selection, Experimental Study
\vfill

\newpage
\spacingset{1.5} % DON'T change the spacing!

\section{Introduction}\label{intro}
%Introducing a new machine learning (ML) algorithm is a challenging task. Typically, it requires a novel idea, a different view point to the problem or perhaps engaging  knowledge from other domains. Usually, it also requires  a theoretical analysis, computational considerations and substantial code-work. At the end of the research and development pipe-line, we would like to evaluate how good the algorithm really is, compared to known alternatives. In the ML community, this is typically done by applying the algorithm to a benchmark of datasets and comparing its performance to state-of-the-art methods.  
%Unfortunately, the results are rarely conclusive. A well-performing algorithm is expected to be highly ranked in many datasets. But how do we really determine that it outperforms the alternatives? For example, consider three competing algorithms over five datasets. Assume that one of the algorithms won (ranked highest) in one of datasets and finished as the runner-up on the rest. An alternative algorithm won two datasets but finished third on the rest. How do we determine which algorithm is better? Shall we just count the number of wins? Perhaps the average ranks? Shall we simply report the complete ranking and let the reader decide?  

Introducing a new machine learning (ML) algorithm is a  complex and demanding task. It often begins with a novel concept, a fresh perspective on an existing problem, or insights borrowed from other disciplines. The development process typically involves theoretical analysis, computational considerations, and extensive implementation work. Once the research and development phase is complete, we aim to assess how well the new algorithm performs compared to existing methods. In the ML community, this evaluation is usually done by testing the algorithm on a suite of benchmark datasets and comparing its results to those of state-of-the-art approaches. However, such comparisons rarely yield clear conclusions. Ideally, a strong algorithm should achieve high rankings across many datasets.  But how do we confidently conclude that it outperforms others? Consider, for example, three algorithms evaluated on five datasets. Suppose that one algorithm ranks first on one dataset and second on the remaining four. Another algorithm wins two datasets but ranks third in the rest. Which algorithm is better? Shall we just count the number of wins? Consider the average rank? Or perhaps present the full ranking and leave the interpretation to the reader?

In this work, we address the fundamental problem of identifying the best-performing algorithm based on  benchmark results. We argue that a natural measure of interest is the probability of winning a future dataset. For each algorithm, our goal is to estimate the \textit{win probability} -- the likelihood that it will outperform its competitors in a new predictive modeling task. This measure is particularly interesting in real-world applications, where the objective is to choose the most promising algorithm for deployment. While this problem has been previously explored, as discussed in Section \ref{related work}, a common approach based on maximum likelihood estimation (MLE), which essentially amounts to counting the number of dataset wins for each algorithm. However, we argue that there is much more information in the complete rankings. The task of effectively leveraging this information falls under the broader scope of rank aggregation, a topic that has received notable attention in the ML literature. Nonetheless, existing methods are often heuristic in nature or rely on strong probabilistic assumptions. We show that accurately estimating win probabilities from full ranking data is a challenging and non-trivial task.

In this work,  we introduce a conceptual framework for estimating the win probability in comparative studies. We propose a simple and intuitive scheme that expresses the estimated win probability as a weighted linear combination of the rankings an algorithm achieves across benchmark datasets. The key challenge lies in finding the optimal weights which minimize the estimation error, measured in terms of total variation (TV) distance and Kullback-Leibler (KL) divergence. Naturally, our estimator generalizes the MLE, which corresponds to the special case where all weight is placed solely on dataset wins. We propose two weighting strategies: the first is a conservative, data-independent scheme, where weights are fixed in advance and designed to control worst-case estimation error. The second is a flexible, data-dependent scheme, where weights are adapted based on the observed data. We evaluate both schemes through synthetic and real-world experiments, demonstrating their effectiveness in generating more accurate and reliable comparisons among competing algorithms. Notably, our method leads to clearer and more robust conclusions about algorithmic performance in comparative studies. An implementation of our framework is publicly available on the author's 
webpage\footnote{\url{https://anonymous.4open.science/r/WhoIstheBestAlgorithm-6780/}}.

It is important to emphasize the practical contribution of our work. In comparative studies, multiple machine learning algorithms are typically evaluated across a range of datasets. Naturally, data-handling, pre-processing and hyperparameter tuning have a significant impact on reported results. In this study, we implicitly assume that the performance of each algorithm reflects the end-to-end pipeline, according to recommended guidelines. In that sense, these results represent the outcome a practitioner could realistically achieve. Our contribution lies in proposing a novel, statistically established rank aggregation scheme, for estimating the win probabilities based on these results. 

\section{Related Work}\label{related work}

Evaluating multiple algorithms across a collection of datasets is a common practice. Numerous methods have been proposed for this purpose, differing in their objectives, statistical foundations, presentation style (qualitative or quantitative), visualization strategies, and more. In this section, we provide a brief overview of some of the more widely used approaches and examine their strengths and limitations in the context of our objectives.

The \textit{average performance} is a simple averaging of the algorithms' performance over all the benchmark datasets, according to a predetermined measure of interest. For example, the averaged mean square error (MSE) for regression problems. While this measure is simple and intuitive, it fails to capture the differences between the datasets. For example, consider two datasets where the typical MSE in one dataset is of scale of $10^{-1}$, while in the other is of scale of $10^2$. It is quite misleading to directly average them, since the former would be negligible compared to the latter.

To address this issue, the \textit{average ranking} approach considers the mean rank that each algorithm attains across a collection of datasets. In this framework, the best-performing algorithm on a given dataset is assigned rank first, the second-best is ranked second, and so on. Consequently, a lower average rank indicates stronger overall performance across all datasets. This method is widely used for comparing multiple algorithms simultaneously \citep{demvsar2006statistical,fernandez2014we}, largely due to its simplicity and interpretability. However, the average ranking method has notable limitations. One key drawback is its implicit assumption that all rank differences carry equal weight. That is, the gap between ranking first and fifth is treated the same as the gap between ranking 21st and 25th. This assumption is problematic: while a first-versus-fifth difference often signals performance superiority, the difference between 21st and 25th is typically negligible, especially when many algorithms are being compared. To mitigate this, weighted averaging can be used. A classical example is the \textit{Borda Count} method \citep{emerson2013original}, which assigns each rank a weight based on how many ranks fall below it. For example, a first-place finish earns a weight of $m-1$, second place 
$m-2$, and so forth. Although Borda Count is intuitive and easy to implement, it remains a heuristic and lacks statistical performance guarantees.

Statistical comparisons is yet another alternative to assess dominance among multiple algorithms. Here, we define a single or multiple hypotheses and test whether the data agree with it. A common example is pair-wise comparisons. For every pair of algorithms we test the null hypothesis that they perform equally well. We repeat this process for every pair of algorithms and correct for multiplicity \citep{demvsar2006statistical}. Unfortunately, this routine is not always adequate. First, the number of hypotheses grows quadratically with the number of algorithms, which makes the multiplicity correction very conservative. Second, the result may not be clear enough. That is, a typical outcome may be ``Algorithm A is better than C, and B is better than E'', as the rest of comparisons do not yield statistically significant conclusions. Naturally, there exist more sophisticated statistical tests for the problem. The ANOVA is a parametric test \citep{fisher1956statistical}. It defines a null hypothesis that all the algorithms perform the same and the observed differences are merely random. Unfortunately, this approach does not suggest which is the better algorithm. The Friedman test \citep{friedman1937use,friedman1940comparison} is a non-parametric counter-part of the ANOVA. It considers the same null as ANOVA and defines a rank statistic with a known null distribution.  If the null hypothesis is rejected, one may  proceed with a post-hoc test. The Nemenyi test \citep{nemenyi1963distribution} is similar to the Tukey test for ANOVA \citep{tukey1949comparing} and is used when all algorithms are compared to each other. The performance of two algorithms is significantly different if the corresponding average ranks differ by at least a critical difference, defined in \citep{demvsar2006statistical}. There exist additional alternative tests, such as the Bonferroni-Dunn test \citep{dunn1961multiple}, Hommel's procedure \citep{hommel1988stagewise}, Holm's procedure \citep{holm1979simple} and others. The interested reader is referred to Section $3.2$ of Dem{\v{s}}ar \citep{demvsar2006statistical} for a comprehensive discussion. The statistical methods mentioned above differ from one to another in several ways. However, they all have a common advantage. They provide a solid framework for deciding on the dominant algorithms, if such exist. However, these methods do not provide a decisive conclusion in cases where the differences are less evident. Furthermore, seeking statistically significant differences on the given dataset does not necessarily imply that an algorithm would outperform its alternatives on a future dataset. In fact, this is one of the major differences between statistical inference and prediction. We further discuss this issue later in Section \ref{experiments}. It is worth mentioning that statistical comparisons typically accompany other measures. For example, it is very common to report the average ranking together with a statistical test that validates which rankings are significant. The presentation of such results are provided in tabular form or as visual representation such as critical difference diagrams, heat-maps, box-plots and others \citep{ye2024closer}.  

Another approach to assess dominance is the Bradley–Terry model \citep{bradley1952rank}. Here, the outcome of pairwise comparisons between two algorithms, A and B, is modeled by $\mathbb{P}(A>B)=\alpha_A/(\alpha_A+\alpha_B)$ where $A>B$ means that Algorithm A is ``better'' than B, and $\alpha_A$ is a positive real-valued score assigned to Algorithm A. The Bradley–Terry model can be generalizes to multiple competitors such that $\mathbb{P}(A>\{B,C,D\})=\alpha_A/(\alpha_A+\alpha_B+\alpha_C+\alpha_D)$. This model is known in the literature as the Plackett-Luce model (see Section $5$ of \citep{hunter2004mm}). Given the performance on a  benchmark of datasets, we may write a likelihood function according to the model and obtain maximum likelihood estimates for the $\alpha$'s. The Bradley–Terry and the  Plackett-Luce models are very popular in sport tournaments, commercial applications and other domains \citep{hunter2004mm}. Indeed, they strive to provide a direct answer to our problem. Unfortunately, these models assume a very specific probabilistic structure, which does not comply with most setups. For example, consider the simplest case where the performance of the competing algorithms are independent and normally distributed with different means. It is straightforward to show that such distribution cannot be represented by the models above. We demonstrate this caveat later in Section \ref{experiments}.

Recently, Fern{\'a}ndez-Delgado et al. \citep{fernandez2014we} introduced an alternative measure of dominance. The \textit{Probability of Achieving the Best Accuracy} (PAMA) criterion measures the proportion of datasets on which a given algorithm attains the highest accuracy.  Notably, PAMA focuses solely on the top-ranked algorithm for each dataset, implicitly assuming that win probability is determined only by past wins, rather than the complete rankings. Yet, this approach directly assess our objective of interest.  Building on this concept, our work extends the idea by proposing a novel framework that estimates win probability using the complete ranking information across all datasets.

\section{Definitions and Problem Statement}\label{defs}

Consider a collection of $m$ ML algorithms, applied to $n$ independent datasets. Let $X_i$ be a vector of rankings of the $m$ algorithms over the $i^{th}$ dataset. For example, $x_i=[2,3,1,4]$ indicates that Algorithm $2$ achieved the best performance on the $i^{th}$ dataset, followed by Algorithms $3,1,4$ respectively. Naturally, there are $m!$ possible rankings for the $m$ algorithms. We denote the probability vector of all possible rankings as $P$ and assume that $X_i$ are independent and identically distributed observations from $P$. This assumption implies two basic restrictions. First, the rankings of the algorithms are independent between datasets ($X_i$ are independent variables). Second, the underlying probability $P$ is consistent across all datasets  ($X_i$ are identically distributed). Although these assumptions may seem strong, they are commonly used most of the methods discussed in Section \ref{related work}.

A natural objective is to estimate $P$ from a collection of observations $x^n=x_1,...,x_n$. However, this task is both very difficult and unnecessary in practice. First, notice that the alphabet size of $P$ (the number of possible outcomes) is $m!$ which may be much greater than $n$. In fact, comparing ten algorithms we have $10!\approx 3.6M$ possible outcomes (rankings), while a typical sample only consists of tens (or hundreds) of datasets. Second, we are practically not interested in the complete rankings of all algorithms. Our goal is to estimate the probability that the $j^{th}$ algorithm wins a future dataset, for $j=1,...,m$. We denote the win probability vector as $p$. Notice that $p_j$ is the sum of all the rankings in $P$ in which the $j^{th}$ algorithm finishes first. Formally, define 
$\mathcal{C}_j$ as the collection of all possible rankings in which the $j^{th}$ algorithm is top-ranked. Then, $p_j=\sum_{l \in \ \mathcal{C}_j}P_l$. 
Thus, our objective is to estimate the much smaller vector $p$ (of size $m$) rather than the full distribution $P$ (of size $m!$). 
\newpage
In this work we focus on two fundamental divergence measures between $p$ and its estimate $\hat{p}$. Namely, the \textit{total variation} (TV) and the Kullback-Leibler (KL) Divergence, 
\begin{align}\label{divergences}
    D_{\text{TV}}(p,\hat{p})=\sum_{j=1}^m|p_j-\hat{p}_j|,\quad D_{\text{KL}} (p||\hat{p})=\sum_{j=1}^mp_j\log\frac{{p}_j}{\hat{p}_j}
\end{align}
respectively \citep{jiao2015minimax,painsky2018universality}. Since both measures are random (as they depend on $\hat{p}$), we focus on their expected performance, also denoted as risk, $\mathbb{E}D_{\text{TV}}(p,\hat{p})$ and $\mathbb{E}D_{\text{KL}}(p || \hat{p})$. 

A natural estimator for $p$ is of course the maximum likelihood estimator (MLE), $\hat{p}_{ml}$. Given a sample $x_1,...,x_n$, we define its likelihood as $$\mathcal{L}=\frac{n!}{\prod^{m!}_{l=1}r_l!}\prod_{l=1}^{m!} P_l^{r_l} $$ 
where $r_l$ is the number of times that the $l^{th}$ ranking appears in the sample. Notice $\mathcal{L}$ is simply a multinomial likelihood function over $P$, and the corresponding MLE is given by $\hat{P}^{ml}=r/n$. As mentioned above, we are interested in a MLE for $p$. Hence, by the basic properties of the MLE we have that $\hat{p}^{ml}=r^{(1)}/n$ where $r^{(1)}_j$ is the number of times that the $j^{th}$ algorithm is top-ranked. 
In other words, the MLE corresponds to the proportion of datasets each algorithm wins. This is exactly the PAMA criterion,  discussed in Section \ref{defs}.

In this work we seek $\hat{p}$ that minimizes the expected divergence (\ref{divergences}). Our point of departure is the MLE, which only considers $r^{(1)}$, the number of wins per each algorithm. We argue that there may be more valuable information about $p$ in additional statistics, such as the number of times that each algorithm finishes second, third and so forth. Hence, we propose the following estimator
\begin{align}\label{our_model}
    \hat{p}^w=\sum_{j=1}^m \frac{w_j}{n} r^{(j)}
\end{align}
where $r^{(j)}$ corresponds to the number of times each algorithm is ranked in the $j^{th}$ positon and $w_j$ are non-negative weights satisfying $\sum_j w_j=1 $.  In other words, this model generalizes the MLE (which corresponds to the special case where $w_1=1$) by allowing the weight to be distributed across all ranks. Notice that $\hat{p}^w_j\geq 0$ and $\sum_j\hat{p}^w_j=1$ for every vector of $w$'s defined above, as desired. Hence, our goal is to minimize the expected divergence (\ref{divergences}) over a choice of $w$. We introduce two complementary strategies to the problem. The first is a worst-case, data-independent strategy, where the weights $w$ are fixed prior to observing any data.  The second is a data-dependent approach, in which the weights are determined adaptively based on the observed sample.

\section{The Data Independent Scheme}\label{data independent}

We begin our analysis by considering the expected total variation under our proposed model with fixed weights. Specifically, we would like to minimize  $\mathbb{E}D_{\text{TV}}(p||\hat{p}^w)$ with respect to a fixed vector $w$. Notice that the main difficulty with this approach is that $p$ is unknown in practice. This means that the obtained weights would also depend on the unknown $p$. Therefore, we take a more conservative minimax approach and consider a universal scheme that minimizes the expected divergence for the worst-case scenario. Formally, define $\Delta_{m!}$ as the collection of all possible distributions over an alphabet size $m!$. Further, let $\mathcal{W}$ be the collection of all $w$ such that  $w_j\geq 0$ for $j=1,...,m$ and $\sum_{j=1}^m w_j=1$. Then, our goal is 
\begin{align} 
    \label{minimax TV}
            &\min_{w \in \mathcal{W}} \max_{P\in \Delta_{m!}} \;\; \mathbb{E}D_{\text{TV}}\left(p,\hat{p}^w\right).
\end{align}
Unfortunately, this minimax  problem in (\ref{minimax TV}) is not an easy task. Notice that for a choice of $w_1=1$, it degenerates to  $$\max_{p\in \Delta_m} \mathbb{E}D_{\text{TV}}\left(p,\hat{p}^{ml}\right)$$ which corresponds to the worst-case convergence rate of the MLE. This problem was extensively studied over the years, with several key results. In fact, it was shown that 
\begin{align}\label{MLE minimax}
\max_{p\in \Delta_m} \mathbb{E}D_{\text{TV}}\left(p,\hat{p}^{ml}\right)\leq \sqrt{\frac{m}{n}}
\end{align}
where the bound is asymptotically tight under a uniform distribution $p=1/m$ \citep{jiao2015minimax}. Following these important contributions, we first introduce an upper bound to the worst-case expected divergence, and then minimize it with respect to $w \in \mathcal{W}$. Finally, we discuss the tightness of our results. For the simplicity of the presentation, we first focus on a simpler special case where $\hat{p}^w=(wr^{(1)}+(1-w)r^{(2)})/n$. That is, we only consider the number of times that every algorithm is top-ranked, $r^{(1)}$, or runner-up, $r^{(2)}$, over the given testbench. Theorem \ref{T1}, whose proof is located in Section \ref{proof1} , introduces our proposed upper bound for this setup.
\begin{theorem}\label{T1}
Let $p \in \Delta_m$ be the win probability for each of the algorithms. Let $\hat{p}^w=(wr^{(1)}+(1-w)r^{(2)})/n$ be our proposed minimax estimator. Then, 
    \begin{align}\label{ub_T1}
\max_{P\in \Delta_{m!}}\mathbb{E}D_{\text{TV}}\left(p,\hat{p}^{w}\right)\leq \sqrt{m}\sqrt{2(1-w)^2+\frac{1}{n}(w^2+(1-w)^2)}.
\end{align}
Further, the weight $w^*$ which minimizes (\ref{ub_T1}) is given by $$w^*=1-\frac{1}{2n+2}.$$
\end{theorem}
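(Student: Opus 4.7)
The plan is to bound $\mathbb{E}D_{\text{TV}}(p,\hat{p}^w)$ coordinate-wise via a bias--variance decomposition and then assemble the coordinates with Cauchy--Schwarz. Write $q_j$ for the probability that algorithm $j$ finishes as runner-up on a single dataset; then $r^{(1)}/n$ and $r^{(2)}/n$ are unbiased for $p$ and $q$ respectively, so $\mathbb{E}[\hat{p}^w_j] = w p_j + (1-w) q_j$ and the coordinate bias is $(1-w)(q_j - p_j)$. Jensen's inequality applied to the concave function $\sqrt{\cdot}$ gives
\[
\mathbb{E}\bigl|\hat{p}^w_j - p_j\bigr| \le \sqrt{\mathrm{Var}(\hat{p}^w_j) + (1-w)^2 (q_j - p_j)^2},
\]
and summing over $j$ and applying $\sum_j \sqrt{a_j} \le \sqrt{m \sum_j a_j}$ (Cauchy--Schwarz) reduces the task to bounding $\sum_j \mathrm{Var}(\hat{p}^w_j)$ and $\sum_j (q_j - p_j)^2$ uniformly in the underlying distribution $P$ on rankings.

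For the variance sum, on any single dataset the events ``algorithm $j$ ranks first'' and ``algorithm $j$ ranks second'' are mutually exclusive, so the covariance of their indicators equals $-p_j q_j$. Expanding the per-dataset variance of the convex combination of these two indicators, summing over $j$, and using $\sum_j p_j = \sum_j q_j = 1$ yields
\[
\sum_{j=1}^m \mathrm{Var}(\hat{p}^w_j) = \tfrac{1}{n}\Bigl[w^2 + (1-w)^2 - \textstyle\sum_j \bigl(w p_j + (1-w) q_j\bigr)^2 \Bigr],
\]
and since the subtracted sum is non-negative, this is at most $(w^2 + (1-w)^2)/n$ for any $P$. For the squared-bias sum I would use the elementary inequality $(a-b)^2 \le a + b$ valid for $a,b\in[0,1]$, which gives $\sum_j (q_j - p_j)^2 \le \sum_j (p_j + q_j) = 2$. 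Substituting both bounds produces the $P$-free upper bound in \eqref{ub_T1}.

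The optimization in $w$ is then a one-variable calculus problem: minimize $f(w) = 2(1-w)^2 + (w^2 + (1-w)^2)/n$. Setting $f'(w) = -4(1-w) + (4w - 2)/n$ to zero gives a linear equation whose solution is $w^{*} = (2n+1)/(2(n+1)) = 1 - 1/(2n+2)$, and since $f$ is a strictly convex quadratic this critical point is the global minimizer on $[0,1]$.

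I expect the main obstacle to be the variance step: the joint distribution of $(r^{(1)}, r^{(2)})$ is intertwined both within a dataset (a ranking is a permutation) and across coordinates, and it is tempting to chase a tighter $P$-dependent bound. The clean observation that rescues the argument is that, after summing over $j$, every $P$-dependent contribution to the variance appears with a non-positive sign and may simply be discarded, leaving a bound that depends only on $w$ and $n$. The companion bound $(q_j-p_j)^2 \le p_j + q_j$ on the squared bias is crude but already saturates the worst case (e.g.\ when $p$ and $q$ are concentrated on different coordinates), so tightening it would not change the optimal $w^*$.
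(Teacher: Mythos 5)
Your proof is correct and follows essentially the same route as the paper's: Jensen's inequality coordinate-wise, the bias--variance decomposition with the runner-up distribution $q$, dropping the non-positive $-\sum_j(wp_j+(1-w)q_j)^2$ term from the variance, bounding the squared bias sum by $2(1-w)^2$, and assembling via $\sum_j\sqrt{a_j}\le\sqrt{m\sum_j a_j}$ (the paper phrases this last step as a concave maximization over the constraint $\sum_j t_j\le g_2(w)$, which yields the identical $\sqrt{m\,g_2(w)}$ bound). The calculus for $w^*=1-1/(2n+2)$ also matches.
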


Let us examine the obtained result. First, we observe that the minimax weight,  $w^*=1-1/(2n+2)$ approaches $1$ for sufficiently large $n$. It is also relatively close to it, even for smaller $n$. Further, the proposed upper bound (\ref{ub_T1}) is almost the same as the MLE's (\ref{MLE minimax}) for a choice of $w^*$. This is not quite surprising. For example, consider a case where one of the algorithms is always ranked second (with probability $1$), while the rest of the algorithms are equally likely to finish first. In this worst-case scenario, putting weight on $r^{(2)}$ is counter-productive with respect to estimating $p$. Furthermore, the estimator's worst-case performance is governed by the (almost) uniform nature of $p$, leading to an (almost) similar bound as the MLE. In other words, we construct a distribution $p$ for which the proposed bound in (\ref{ub_T1}) becomes tight, as desired.  Despite the above, the proposed minimax scheme with $w^*$ may be useful in cases where $n$ is relatively small and a conservative approach is required. We further discuss this scenario in Section \ref{experiments}. It is important to stress that the worst-case distribution that nearly achieves the bound in (\ref{ub_T1}) is highly unlikely in practical scenarios. That is, $w^*$ controls a worst-case ``pathological'' $P$, where one algorithm always ranks second while the rest are equally likely to be first.  In real-world applications, such behavior is improbable, making the use of $w^*$  overly conservative. On the other hand, imposing additional constraints on $P$ to overcome this issue requires additional assumptions on the underlying model. 

To further refine and generalize our results, we introduce Theorem \ref{T3} which extends the analysis to the top $K$ ranks, $r^{(1)},...,r^{(K)}$. This theorem provides an upper bound that can be efficiently computed through a straightforward convex optimization over $K$ parameters.  The full proof of Theorem \ref{T3} is provided in Section \ref{proof2}.
 
\begin{theorem}\label{T3}
Let $p \in \Delta_m$ be the probability that each algorithm wins. Let $\hat{p}^w=\frac{1}{n}\sum_{j=1}^K w_jr^{(j)}$ be our proposed estimator. Assume that $w_1\geq w_2\geq...\geq w_K$ and $w_1\geq \frac{8n-\sqrt{8n}}{8n-1}\approx1-{1}/{\sqrt{8n}}$. Then, 
    \begin{align}\nonumber
&\mathbb{E}D_{\text{TV}}\left(p,\hat{p}^{w}\right)\leq \max_{t=[t_1,...t_K]^T\in\Delta_K} \sum_{j=1}^{K-1} \sqrt{\left((1-w_1)t_j-w_{j+1}\right)^2+\frac{1}{n}\left( w_1^2t_j+ w_2^2\right)}+\\\nonumber
   &\quad\quad\quad\quad\quad\quad\quad\quad\quad\quad\sqrt{(1-w_1)^2t_K^2+\frac{(m-K+1)}{n}\left( w_1^2t_K+ w_2^2(m-K+1)\right)}
\end{align}
Further, under the stated conditions, the maximization problem above is convex in $t=[t_1,...t_K]^T\in\Delta_K$.
\end{theorem}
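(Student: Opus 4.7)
My plan is to generalize the bias--variance argument used in Theorem \ref{T1} to $K$ weights, upper bound each per-algorithm contribution via Jensen's inequality, then reduce the worst-case over rank distributions to a simplex parameter $t \in \Delta_K$, and finally verify that the resulting maximization is a convex program by checking the concavity of each summand in its associated $t_j$.

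First, I would let $q_{j,k} = \mathbb{P}(\text{algorithm } j \text{ ranks } k\text{-th})$ so that $p_j = q_{j,1}$ and $\mathbb{E}\hat{p}_j^w = \sum_{k=1}^K w_k q_{j,k}$. Jensen's inequality gives
\[
\mathbb{E}|p_j - \hat{p}_j^w| \leq \sqrt{\mathrm{Var}(\hat{p}_j^w) + (p_j - \mathbb{E}\hat{p}_j^w)^2}.
\]
The bias equals $(1-w_1)q_{j,1} - \sum_{k\geq 2} w_k q_{j,k}$; the variance, via standard multinomial covariance formulas, is $\tfrac{1}{n}\bigl[\sum_k w_k^2 q_{j,k} - (\sum_k w_k q_{j,k})^2\bigr] \leq \tfrac{1}{n}\sum_k w_k^2 q_{j,k}$. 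Using the monotonicity $w_k \leq w_2$ for $k \geq 2$ together with $\sum_{k\geq 2} q_{j,k} \leq 1$, this simplifies to $\tfrac{1}{n}(w_1^2 q_{j,1} + w_2^2)$.

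Next, I would maximize the sum of the resulting per-algorithm square roots over $\{q_{j,k}\}$ subject to the column constraints $\sum_j q_{j,k} = 1$ and row constraints $\sum_k q_{j,k} \leq 1$. An exchange argument, shifting rank-$k$ ($k \geq 2$) mass of the $j$-th distinguished algorithm onto position $k=j+1$, combined with treating the remaining $m-K+1$ algorithms by symmetry, reduces the maximization to a parameter $t \in \Delta_K$ where $t_j$ is the win probability of algorithm $j$ (for $j = 1,\ldots,K-1$) and $t_K$ is the aggregate win probability of the $m-K+1$ symmetric background algorithms. The aggregated contribution of these background algorithms collapses into the last square-root term via the identity $(m-K+1)\sqrt{X} = \sqrt{(m-K+1)^2 X}$.

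Finally, for the convexity claim, each summand has the form $g(t_j) = \sqrt{a t_j^2 + b t_j + c}$ with $a = (1-w_1)^2$, $b = -2(1-w_1)w_{j+1} + w_1^2/n$, and $c = w_{j+1}^2 + w_2^2/n$. A direct second-derivative computation shows that $g''(t_j)$ has the sign of $4ac - b^2$, so concavity of $g$ reduces to $b^2 \geq 4ac$. Expanding this inequality, worst-casing over $w_{j+1}$ and $w_2$ under the monotonicity, and solving the resulting quadratic in $w_1$ produces exactly the threshold $w_1 \geq (8n-\sqrt{8n})/(8n-1)$. Since the objective is separable in the $t_j$'s, concavity of each summand implies concavity of the whole objective on the simplex, making the maximization a convex program.

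The main obstacle I anticipate is the worst-case reduction: rigorously justifying the exchange argument that shifts rank-$k$ mass across algorithms while respecting both the column constraints $\sum_j q_{j,k} = 1$ and the row constraints $\sum_k q_{j,k} \leq 1$, and simultaneously invoking symmetry to collapse the $m-K+1$ background algorithms into a single aggregate term. The concavity verification, though algebraically involved, is a routine discriminant check once the worst-case structure is in hand.
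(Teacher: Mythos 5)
Your overall strategy coincides with the paper's: Jensen plus the bias--variance decomposition, the variance bound $\tfrac{1}{n}(w_1^2 p_j^{(1)}+w_2^2)$ via $w_k\le w_2$ and $\sum_{k\ge 2}p_j^{(k)}\le 1$, and the discriminant computation for concavity are all exactly what appears in the paper; in particular your reduction of concavity of $\sqrt{at^2+bt+c}$ to $b^2\ge 4ac$, worst-cased over $w_{j+1}\le w_2\le 1-w_1$, reproduces Proposition \ref{prop3} and the threshold $w_1\ge(8n-\sqrt{8n})/(8n-1)$ correctly.

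The genuine gap is the step you yourself flag as the main obstacle: the passage from the maximization over the joint rank distributions $p^{(2)},\dots,p^{(K)}$ to the simplex parameter $t\in\Delta_K$. Your ``exchange argument, shifting rank-$k$ mass of the $j$-th distinguished algorithm onto position $k=j+1$'' is not a proof, and a naive row-wise exchange is not even feasible: moving mass within one algorithm's rank distribution breaks the column constraints $\sum_j p_j^{(k)}=1$, so any exchange must be a coupled move across several algorithms and ranks, and one must also show the objective does not decrease along that move. The paper closes this hole in Proposition \ref{prop2} by a different mechanism: after replacing $f$ by the upper bound $g$, it shows $\sum_j g$ is \emph{convex} as a function of the matrix $\tilde V=[p^{(2)},\dots,p^{(K)}]$ over the polytope of $(K-1)$-column subsets of doubly stochastic matrices, and then invokes the Birkhoff--von Neumann theorem to write any such $\tilde V$ as a convex combination of partial permutation matrices, so the maximum is attained at a degenerate (0/1) configuration. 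Only then does concavity in $p^{(1)}$ (your discriminant condition) justify symmetrizing the $m-K+1$ background algorithms into the single aggregate term, and a final bijection argument shows all degenerate configurations yield the same problem, permitting the canonical assignment $w_{k_j}=w_{j+1}$. Your plan needs this convexity-plus-vertex (or an equivalently rigorous) argument spelled out; without it the central inequality of the theorem is unproved.
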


Theorem \ref{T3} provides a worst-case upper bound on the expected TV  between $p$ and its corresponding linear estimator that incorporates the top $K$ ranks. While this theorem enhances model flexibility, it does not yield a closed-form expression for the optimal weights $w^*$ that minimize the bound. In practice, this means we must numerically evaluate various weight vectors $w$ that satisfy the theorem’s conditions and select the one which minimizes the bound.

Despite this limitation, Theorem \ref{T3} introduces a notable improvement over Theorem \ref{T1}, as it allows for a richer model that leverages more ranking information.
To illustrate this, we compare Theorem \ref{T3} for $K=2$ and $K=3$ with Theorem \ref{T1} and the MLE bound (\ref{MLE minimax}). Figure \ref{fig21} presents the obtained upper bounds for $m=5$ and varying values of $n$. As we can see, the $K=2$ bound closely matches Theorem \ref{T1}, with minor differences attributed to its numerical nature.  Yet, we observe a notable improvement in our results as the model increases to $K=3$, demonstrating the benefit of incorporating additional rank information.

%Notice that these results do not imply the optimality of our proposed scheme. They only show that the proposed upper bound improve upon currently known results (\ref{MLE minimax}). In other words, while (\ref{MLE minimax}) is known to the asymptotically tight \citep{jiao2015minimax}, its finite-sample performance are still subject to further study. Therefore, we cannot claim that we improve upon a tight result in a finite-sample regime. Yet, we do improve the best known  upper bound and by that introduce enhanced performance guarantees. In the following sections, we provide a comprehensive study of the performance of the proposed bound in a collection of synthetic and real-world examples. 

These results should not be interpreted as establishing the optimality of our proposed scheme. Rather, they demonstrate that the proposed upper bound strictly improves upon the best currently known result (\ref{MLE minimax}). In other words, although (\ref{MLE minimax}) is known to be asymptotically tight \citep{jiao2015minimax}, its finite-sample behavior is subject to further study. Consequently, our contribution does not constitute an improvement over a provably tight MLE finite-sample result. Instead, we improve the strongest available upper bound to date, yielding refined performance guarantees. In the following sections, we complement our theoretical findings with a detailed study of the proposed bound, evaluated on both synthetic and real-world examples.

\begin{figure}[ht]
\centering
\includegraphics[width =0.6\textwidth,bb=100 260 480 570,clip]{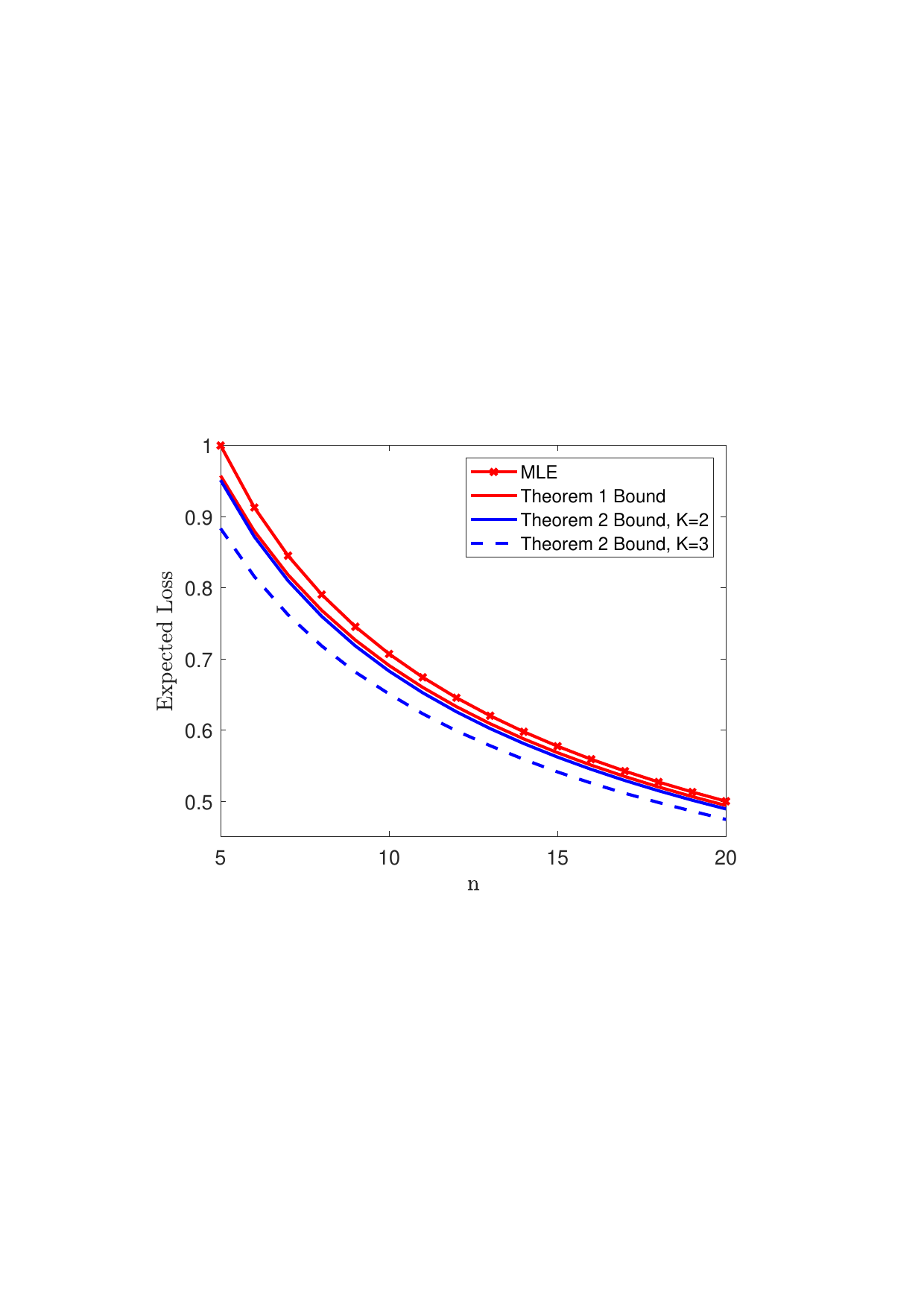}
\caption{Data-independent upper bounds for $m=5$, under the expected TV distance}
\label{fig21}
%\vspace{-1.5em}
\end{figure}

%\begin{figure}[ht]
%\centering
%\includegraphics[width =0.8\textwidth,bb=20 110 750 500,clip]{TV_new_results_rotated.pdf}
%\caption{Synthetic Experiment as in the main text, with the newly proposed estimator from Theorem 2}
%\label{fig31}
%\vspace{-1.5em}
%\end{figure}

Next, we would like to consider a similar minimax approach for the expected KL divergence (\ref{divergences}). Unfortunately, this task is quite problematic. Notice that the KL divergence is  unbounded in cases where $p_j>0$ while $\hat{p}_j=0$. This means that any estimator that assigns zero probabilities is prone to obtain  unbounded loss (and hence unbounded expected loss). For this reason, minimax analysis of discrete distributions under KL divergence require additional assumptions and constraints, especially in a finite sample regime. See \citep{orlitsky2015competitive} for a comprehensive discussion. Therefore, we refrain from a worst-case KL analysis and study this divergence only in a data-dependent regime. 

To conclude, the minimax approach provides data-independent weights $w$ while considering the worst-case distribution $P\in \Delta_{m!}$. While inherently conservative, this strategy produces robust and non-trivial weights that offer a meaningful improvement over the classical MLE, especially in small-sample settings (see Section \ref{experiments}). In the following section we introduce an alternative data-dependent scheme which does not consider a worst-case distribution and obtains favorable weights which depend on the data in hand.

\section{The Data Dependent Scheme}\label{data dependent}
Ideally, we would like to minimize the expected loss, where the expectation is taken over the true underlying model $P$. Unfortunately, this model is unknown, and so we require an alternative approach. In Section \ref{data independent} we focus on a minimax criterion, where we consider the worst-case underlying model. Naturally, this leads to conservative results. In this section we consider an empirical risk minimization scheme, which replaces the expectation with its empirical counter-part. This approach is perhaps the most common paradigm in learning theory, with many favorable properties. We begin our analysis with the KL divergence, and later introduce a complementary TV scheme. First, we observe that minimizing $\mathbb{E}D_{\text{kl}}(p||\hat{p})$ with respect to $\hat{p}$ is equivalent to minimizing $-\mathbb{E}\sum_{j=1}^m p_j\log\hat{p}_j$. Hence, the corresponding empirical risk is given by 
\begin{align}
    R^{\text{KL}}=-\frac{1}{n}\sum_{i=1}^n\sum_{j=1}^m\mathbb{I}(y_i=j)\log(\hat{p}_j)=-\frac{1}{n}\sum_{j=1}^m r^{(1)}_j\log \hat{p}_j 
\end{align}
where $y_i$ denotes the winning algorithm over the $i^{th}$ dataset. This is exactly the well-known cross-entropy loss function. Plugging our proposed model (\ref{our_model}) to the above and minimizing over $w$ yields $w_1=1$, which corresponds to the MLE.  This is not surprising as the MLE is known to be the minimizer of the cross-entropy function (over any choice of $\hat{p}$). 

However, it is well known that directly minimizing the empirical loss may lead to overfitting the observed data. To address this issue, we adopt the widely used leave-one-out (loo) strategy as an alternative. Given a collection of 
$n$ datasets and $m$ algorithms, the proposed procedure is as follows: in each iteration, we remove one dataset and compute the estimator, $\hat{p}^w$,  based on the $n-1$ remaining datasets. Then, we evaluate the loss on the excluded dataset. This process is repeated for each of the $n$ datasets, and we select the weight vector $w$ that minimizes the average loss across all loo iterations. Formally,  let $x^n=x_1,...,x_n$ be the collection of rankings over all the datasets (as defined in Section \ref{defs}). Denote $x^{n}{[-i]}$ the collection of all the datasets, with the $i^{th}$ dataset removed. Likewise,  define $r^{(j)}[-i]$ as the number of times that each algorithm finished at the $j^{th}$ position, over the collection of datasets $x^{n}{[-i]}$. Then, the corresponding estimator is $\hat{p}^w[-i]=(1/n) \sum_{j=1}^m w_jr^{(j)}[-i]$. This serves as our loo estimator for the $i^{th}$ iteration. Therefore, the loo loss is given by
\begin{align}\label{loo loss}
    L^{\text{KL}}_{loo}=-\frac{1}{n}\sum_{i=1}^n\sum_{j=1}^m\mathbb{I}(y_i=j)\log(\hat{p}^w_j[-i]),
\end{align}
and $w^{loo}$ is the vector of weights $w\in\mathcal{W}$ which minimizes (\ref{loo loss}).  Notice that given $x^n$, the loo loss minimization (\ref{loo loss}) is convex in $w$ and may be evaluated by standard convex optimization. Algorithm \ref{alg1} below summarizes our proposed method.
\begin{algorithm}
\caption{Leave-one-out Estimation Scheme}
\begin{algorithmic} [1]
\REQUIRE $x^n$, $m$
\STATE Define $x^{n}{[-i]}$ as the collection of all the dataset, excluding the $i^{th}$ dataset.
\STATE Define $r^{(j)}[-i]$ as the number of times that each algorithm finished at the $j^{th}$ position, over the collection  $x^{n}{[-i]}$.
\STATE Denote  $\hat{p}^w[-i]=(1/n) \sum_{j=1}^m w_jr^{(j)}[-i]$.
\STATE Minimize (\ref{loo loss}) with respect to $w\in \mathcal{W}$. 
\STATE Define $w^{loo}$ as the minimizer of Step $4$.
\RETURN $\hat{p}^w=\sum_{j=1}^mw^{loo}_jr^{(j)}$. 
\end{algorithmic}
\label{alg1}
\end{algorithm}

As opposed to the minimax approach, the proposed loo scheme introduces data-dependent weights, which are less conservative and consider the data in hand. Notice that for every fixed $w$, the loo loss is an (almost) unbiased estimate of the expected loss $\mathbb{E}D_{\text{KL}}(p||\hat{p}_w)$. Hence, we obtain an (almost) unbiased estimate for our desired measure of interest. The loo approach is not new to learning theory and practice. In fact, is also quite common in highly related parameter estimation problems. For example, the seminal Good-Turing estimator \citep{good1953population}, which is perhaps the most common scheme for distribution estimation over large alphabets (that is, where $m\gg n$) can also be interpreted as a loo scheme \citep{mcallester2000convergence}. In that sense, our proposed method utilizes a similar conceptual framework of parameter estimation in a discrete and relatively large alphabet regime, as later discussed in Section \ref{experiments}. While Algorithm \ref{alg1} allows minimizing the loo loss with respect to $w_j$, where $j=1,...,m$, we typically restrict our attention to simpler models where $j\leq 3$. This corresponds to a model which only considers the top three ranks in each dataset (similarly to $K=3$ in Theorem \ref{T3}). We observe that for higher order models, the weights are typically negligible. In addition, we add a constraint to step $4$ in Algorithm \ref{alg1}, which  requires that $w^{loo}_j$ are non increasing with $j$. This constraint is quite natural in practice, as $r^{(j)}$ should be more informative about the winning algorithm than $r^{(j+1)}$.

Finally, we note that a similar loo approach applies to any divergence measure of interest. For example, the empirical counter-part of the TV is known to be $R^{TV}=\frac{1}{n}\sum_{i=1}^n\sum_{j=1}^m|\mathbb{I}(y_i=j)-\hat{p}_j|$. Hence, the corresponding loo scheme is given by    
\begin{align}\label{TV loo loss}
    R^{TV}_{loo}=\frac{1}{n}\sum_{i=1}^n\sum_{j=1}^m|\mathbb{I}(y_i=j)-\hat{p}^w_j[-i]|,
\end{align}
which is again convex in $w\in\mathcal{W}$ and holds all the favorable properties of  (\ref{loo loss}). We demonstrate, compare  and discuss both the proposed estimators in the following section. 

\section{Experiments}\label{experiments}

Let us now illustrate the performance of our suggested scheme in synthetic and real-world experiments. 

\subsection{Synthetic Experiments}
First, we study study six example distributions, which are common benchmarks for probability estimation and related problems \citep{orlitsky2015competitive}. The Zipf's law distribution is a typical benchmark in large alphabet probability estimation; it is a commonly used heavy-tailed distribution, mostly for modeling natural (real-world) quantities in physical and social sciences, linguistics, economics and others fields \citep{saichev2009theory}. The Zipf's law distribution follows $P(u;s,m)={u^{-s}}/{\sum_{v=1}^m v^{-s}}$ where $s$ is a skewness parameter.  Additional examples of commonly used heavy-tailed distributions are the geometric distribution, $P(u;\alpha)=(1-\alpha)^{u-1}\alpha$, the negative-binomial distribution (specifically, see \citep{efron1976estimating}), $P(u;l,r)= \binom{u+l-1}{u} r^u(1-r)^l$ and the beta-binomial distribution $P(u;m,\alpha,\beta)= \binom{m}{u}{B(u+\alpha,m-u+\beta)}/{B(\alpha,\beta)}$. Notice that the support of the geometric and the negative-binomial distributions is infinite. Therefore, for the purpose of our experiments, we truncate them to an alphabet size $m$ and normalize accordingly. Additional example distributions are the uniform, $p(u)=1/m$, and the step distribution, $P(u)\propto 1$ for $u\leq m/2$ and $P(u)\propto 1/2$ otherwise. 
In our context, the heavy-tailed  distributions correspond to a case where a small number of algorithms outperform the others, while the more uniform setups correspond to the case where all algorithms are more equally ranked.

%First, we study two example distributions $P$, which are common benchmarks for distribution estimation and related problems \citep{orlitsky2015competitive}. The Zipf's law distribution is a typical heavy-tailed benchmark in probability estimation; it is commonly used for modeling natural (real-world) quantities in physical and social sciences, linguistics, economics and others fields \citep{saichev2009theory}. The Zipf's law distribution follows $P(u;s,m)={u^{-s}}/{\sum_{v=1}^m v^{-s}}$ where $m$ is the alphabet size and $s$ is a skewness parameter. In our context, the heavy-tailed Zipf's law distribution corresponds to a case where a small number of algorithms outperform the others. On the other edge, we examine a uniform, $p(u)=1/m$. This setup corresponds to the case where all algorithms are equally likely to be ranked in any place. 

In our first experiment we study $m=6$ competing algorithms. Therefore, we have $m!=720$ possible rankings and $P$ is the probability of each of these rankings. The ordering of the alphabet (that is, which ranking corresponds to which probability) is assigned at random. We draw $n$ samples from $P$. Notice that the $i^{th}$ sample corresponds to a complete ranking of the $m$ algorithms on $i^{th}$ dataset. Given the $n$ samples, we estimate $p$ and evaluate the divergence of interest (\ref{divergences}). We repeat this process $1000$ times (that is, draw $n$ samples, compute $\hat{p}$ and evaluate $D_{\text{KL}}(p||\hat{p}),\; D_{\text{TV}}(p,\hat{p})$) to attain the averaged divergence. We compare the following estimators. First, we compute the MLE as a benchmark method. Then, we evaluate our proposed data-dependent scheme (Algorithm \ref{alg1}) with only three weights $w_1,w_2,w_3$, as discussed above. We further compute the data-independent scheme where relevant. As a lower bound, we evaluate an Oracle estimator who knows the true $p$ and computes the optimal weights by minimizing the true expected divergence. Last, we consider an additional alternative, as we directly estimate $P$ with Good-Turing large alphabet estimator \citep{gale1995good,painsky2022convergence}, and obtain a corresponding estimation for $p$. Figure \ref{fig1} demonstrates the results we achieve under KL divergence, for the six distributions above, as $n$ increases. 

\begin{figure}[ht]
\centering
\includegraphics[width =0.9\textwidth,bb=30 90 770 510,clip]{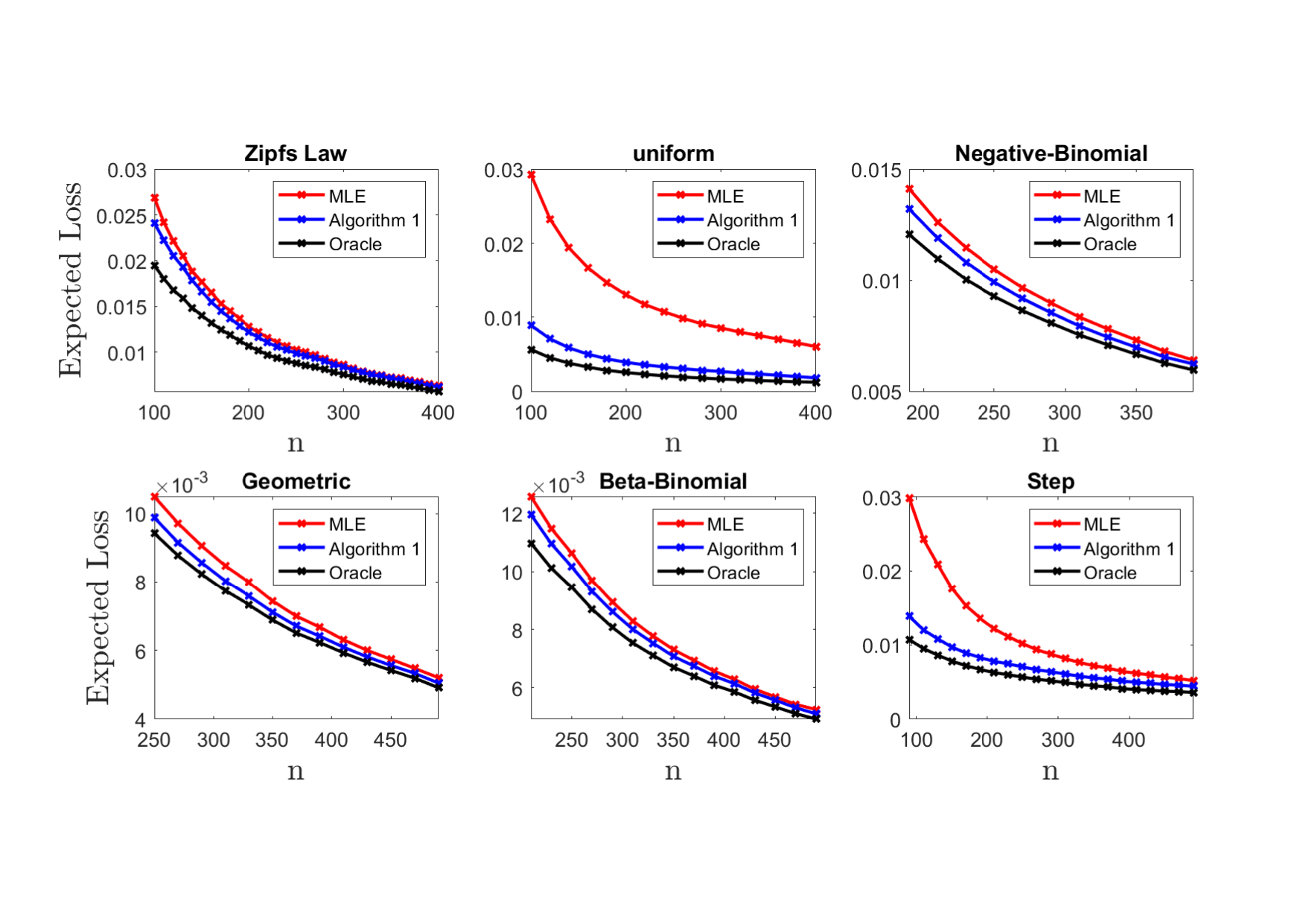}
\caption{Comparing Algorithm $1$ (blue) with the MLE (red) and an Oracle estimator (black) in two synthetic experiments. We use the following parameters: Zipf's Law: $s=1.01$, Geometric: $
\alpha=0.4$, Negative-Binomial: $l=1, r=0.003$, Beta-Binomial: $\alpha=\beta=2$.}
\label{fig1}
%\vspace{-1.5em}
\end{figure}

As we can see, our proposed scheme (in blue) outperforms the MLE (in red) quite significantly. This is especially evident with the uniform and step distributions, where it is only natural to put more weight on $r^{(2)}$ and $r^{(3)}$, as opposed to completely ignoring them. Further, our proposed scheme is relatively close to the Oracle, where the gap closes as $n$ increases. This is not quite surprisingly, as Algorithm \ref{alg1} is a leave-one-out, which performs better as $n$ increases. Finally, notice that all three methods converge to the same performance for sufficiently large $n$, as expected. We omit the GT approach from Figure \ref{fig1} as it is not competitive for with the alternatives. Let us further examine the results of Algorithm \ref{alg1}. Figure \ref{fig2} presents the average weights of Algorithm \ref{alg1}, compared with the Orcale weights.  We focus our attention to the Zipf's Law and the uniform distributions for brevity. First, consider the Zipf's Law experiment (left). We notice that $w^{loo}_1$ tends to $1$ as $n$ increases. On the other hand, the weights are much more uniform (and steady) with the uniform distribution (right). Again, this is quite obvious. The Zipf's Law distribution is heavy-tailed, which means that a few algorithms are dominant, This means we put more weight of the winner. On the other hand, the uniform distribution suggests that all algorithms are equally likely to win, which means that $r^{(2)}$ and $r^{(3)}$ are not negligible, compared to $r^{(1)}$. Our proposed scheme successfully adapts this behavior, given only the data in hand.

\begin{figure}[ht]
\centering
\includegraphics[width =0.8\textwidth,bb=10 180 570 650,clip]{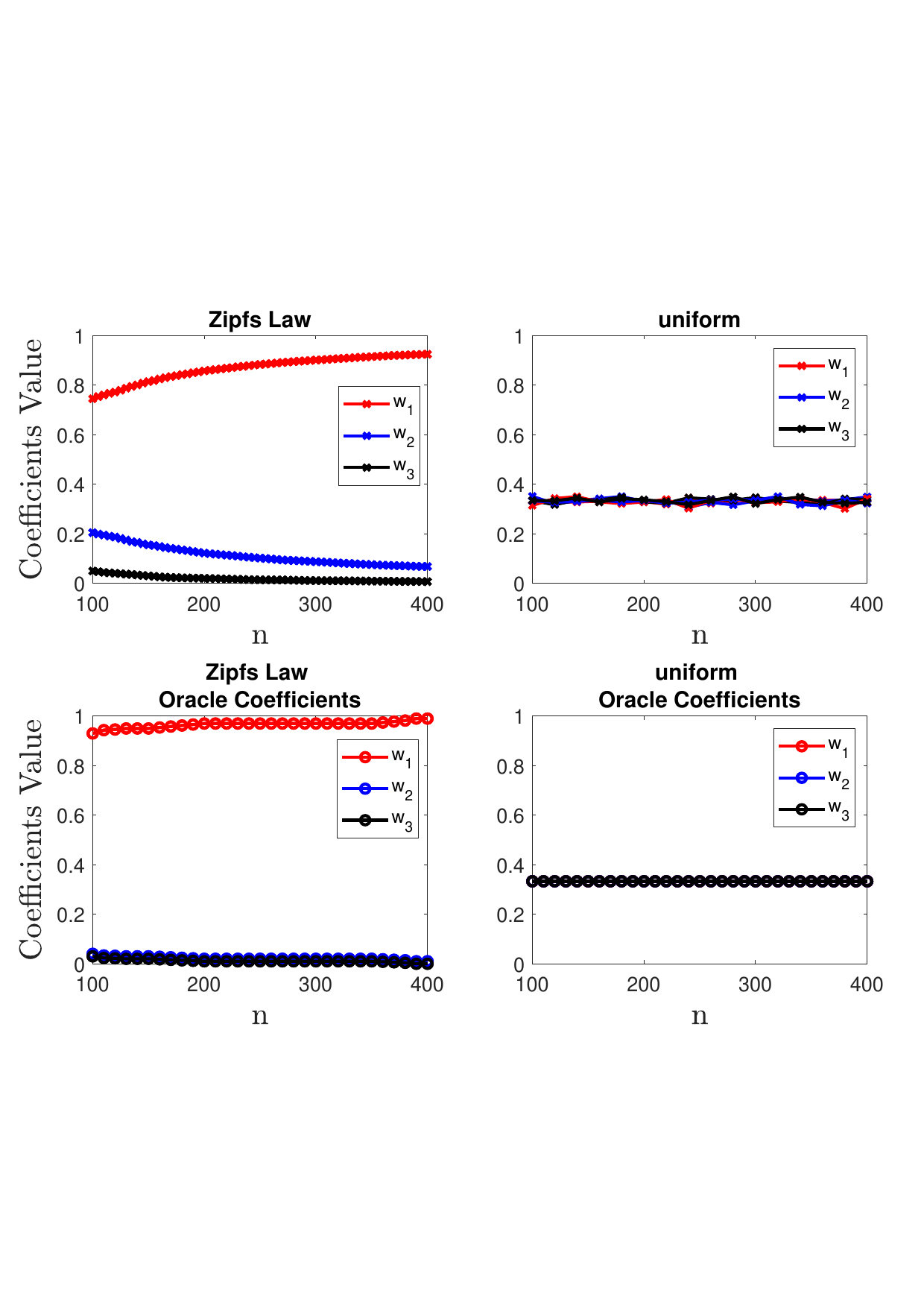}
\caption{The obtained weights of Algorithm $1$ (upper charts) and the Oracle (lower charts) in the synthetic experiment above (Figure \ref{fig1})}
\label{fig2}
%\vspace{-1.5em}
\end{figure}

Next, we repeat the same experiment under the TV distance. Here, we focus our attention to a very small number of datasets (small $n$), which is also quite common in the literature. We illustrate our minimax approach (Section \ref{data independent}), which is better suited for such limited-sample setups than the leave-one-out method. Figure \ref{fig3} demonstrates the results we achieve for Theorem \ref{T1} and Theorem \ref{T3} with $K=3$. Once again, we compare our results to an Oracle who knows $P$ but restricted to a $K=2$ model. 
%As we can see, the obtained behavior is similar in spirit to the experiment above, where our proposed methods outperform the alternatives (GT is once again omitted as it is not competitive). Specifically, we observe a significant improvement compared to the MLE, where the $K=3$ model demonstrates a notable improvement compared to $K=2$ and Theorem \ref{T1}.  
The observed trends closely resemble those from the previous experiment: our proposed estimators consistently outperform alternative approaches (with GT omitted again due to poor performance). Notably, both minimax-based methods significantly outperform the MLE, as we observe a notable improvement when extending the model from $K=2$ to $K=3$.

\begin{figure}[ht]
\centering
\includegraphics[width =1\textwidth,bb=40 50 770 540,clip]{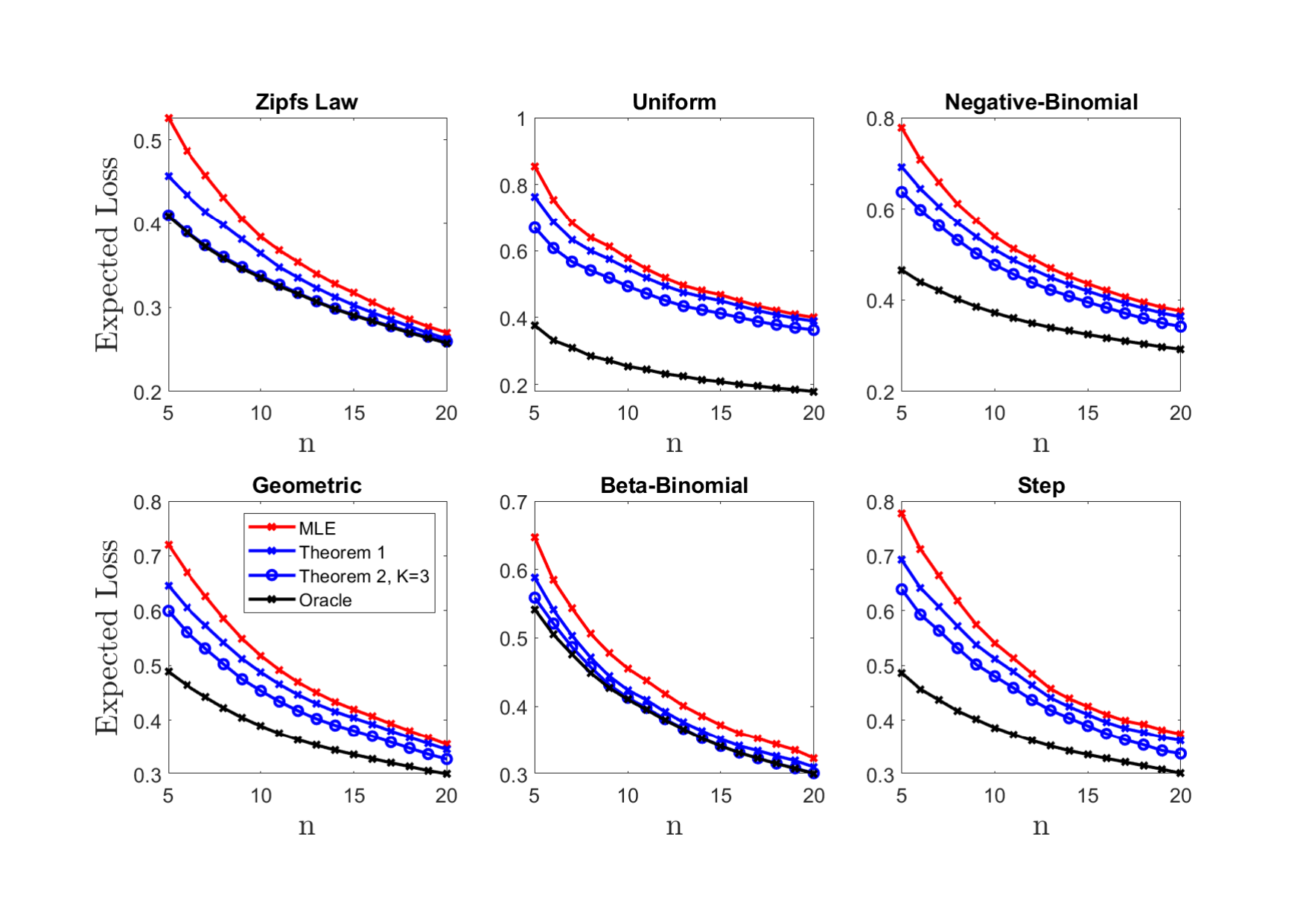}
\caption{Comparing the data-independent schemes (Theorems \ref{T1} and \ref{T3}) with the MLE (red) and an Oracle estimator (black) in small sample  synthetic experiments. The distributions' parameters are described in Figure \ref{fig1}. }
\label{fig3}
\vspace{-1.5em}
\end{figure}

\subsection{Real-world Experiments}

We now proceed to real-world experiments. Here, we consider large comparative studies, where multiple key ML algorithms are compared against each other over a benchmark of datasets. First, we revisit the work of McElfresh et al. \citep{mcelfresh2024neural}. In this work, the authors introduce a benchmark of $176$ tabular datasets, and compare a total of $19$ popular ML algorithms. Their goal is to study the performance of gradient boosted decision trees (GBDT) compared to neural networks (NN), over tabular data. They conclude that ``for a surprisingly high number of datasets, either the performance difference between GBDTs and NNs is negligible, or light hyper-parameter tuning on a GBDT is more important than choosing between NNs and GBDTs''. In addition, they introduce the \textit{TabZilla} benchmark suite, which is a collection of $36$ of their ``hardest'' datasets. The performance of the studied algorithms over TabZilla is provided in Table $4$ of McElfresh et al. \citep{mcelfresh2024neural}, where they report the rankings of the top three algorithms on each dataset. As mentioned above, the conclusions are not decisive. The top-performing algorithms (according to TabZilla) vary quite significantly, depending on the measure of interest. The PAMA criterion (which corresponds to the MLE), suggests that the top three algorithms are \textit{CatBoost}, \textit{XGBoost} and \textit{LightGBM} with corresponding estimates of $0.2667,0.2667$ and $0.2333$, respectively. Notice that the averaged rankings cannot be evaluated from Table $4$, as it requires the full ranking and not just the top three for each dataset. Applying our suggested data-dependent scheme to Table $4$ (under KL divergence, following Algorithm \ref{alg1}), we obtain almost uniform $w^{loo}$. This leads to the top three algorithms: \textit{XGBoost}, \textit{CatBoost} and \textit{LightGBM} with corresponding estimates of $0.2777,  0.2447$ and $0.1563$, respectively. In other words, while the MLE suggests that only $r^{(1)}$ is an informative statistic, our proposed scheme suggests that there is also information in $r^{(2)},r^{(3)}$. In fact, notice that while \textit{CatBoost} and \textit{XGBoost} attain the same number of wins, they differ quite significantly in the number of times they finish second (\textit{CatBoost} with $5$ times, while \textit{XGBoost} with $16$). Our proposed scheme successfully utilizes this information  and draws a more decisive conclusion. Further, we apply the Borda Count and the Bradley-Terry models (see Section \ref{related work}).  Interestingly, the Borda Count model leads to the following top three algorithms: \textit{CatBoost}, \textit{XGBoost} and \textit{ResNet}. On the other hand, the Bradley-Terry model suggests that the best-performing algorithms are \textit{CatBoost}, \textit{LightGBM} and \textit{ResNet}, where the model's scores are evaluated only according to the available top three ranks.  It is important to note that the additional statistical tests discussed in Section \ref{related work} are not applicable to our comparison. Specifically, our objective is to derive a point estimate for the best-performing algorithm, while statistical tests are designed to assess whether differences in benchmark performance are statistically significant. Therefore, comparing point estimators with hypothesis tests is conceptually misleading, as they serve fundamentally different purposes.

Finally, we conduct the following experiment to validate our results. We define a subset of datasets  on which we evaluate $\hat{p}$ (namely, the ``train-set''). Then, we examine the estimators' performance on the remaining datasets (``test-set''). We perform $k$-fold splitting, and repeat the above for every split. We report the average loss (in terms of cross-entropy). Our proposed method outperforms the MLE with an average loss of $1.78$ compared to $1.86$, with a corresponding p-value of $0.0075$. These results further support the effectiveness of our approach. The Borda Count attains an average loss of $1.91$ while the Bradley-Terry model achieves an average loss of $1.97$. Table \ref{tab:real_datasets} summerizes our results.

\begin{table}[h]
\centering
\caption{\textit{TabZilla} benchmark results \citep{mcelfresh2024neural}. We report the three top-performing algorithms according to each scheme with a corresponding average cross-validation loss, as described in the text. The statistically significant best performing scheme (at level $0.05$) is \textbf{bolded}.}
\label{tab:real_datasets}
\begin{tabular}{lrrrr}
\toprule
\textbf{Scheme} & \textbf{First} & \textbf{Second} & \textbf{Third} & \textbf{Average Loss} \\
\midrule
MLE & \textit{CatBoost} & \textit{XGBoost} & \textit{LightGBM} &  $1.86$ \\
Bradley-Terry & \textit{CatBoost} & \textit{LightGBM} & \textit{ResNet} &  $1.97$ \\
Borda Count & \textit{CatBoost} & \textit{XGBoost} & \textit{ResNet} &  $1.91$ \\
Our proposed method & \textit{XGBoost} & \textit{CatBoost} & \textit{LightGBM} &  \textbf{1.78} \\
\bottomrule
\end{tabular}
\end{table}

Next, we focus on the work Fern{\'a}ndez-Delgado et al. \citep{fernandez2014we}. In this work, the authors evaluate a total of  $179$ classifiers (arising from $17$ basic models), implemented on Weka, R, C and Matlab. They use $121$ data sets, which represent the entire UCI repository and additional real problems, in order to achieve significant conclusions about the classifiers' behavior. They conclude that the classifiers most likely to be the best are the random forest (RF) versions. However, the gap with the second best (SVM with a Gaussian kernel) is not statistically significant. We apply our proposed scheme to the results of Fern{\'a}ndez-Delgado at al. \citep{fernandez2014we}. First, notice that the study of Fern{\'a}ndez-Delgado et al. \citep{fernandez2014we} compares an extremely large number of algorithms, which even exceeds the number of datasets. Furthermore, many of the algorithms are practically identical, as they only differ in the platform that they were implemented on. Hence, analyzing the winner among this set of algorithms is very noisy and prone to random results. Therefore, as a first step in reducing unnecessary comparisons, we only study one version of each algorithm. That is, we do not compare the same algorithm, implemented on two different platforms. For simplicity, let us focus on the Weka algorithms, which consist of $m=51$. The top performing Weka algorithms according to the MLE criterion are \textit{Rotation Forest}, \textit{Random Committee}, \textit{END}, \textit{LibLINEAR} and \textit{Decorate} with estimated probabilities of $0.1368$, $0.0684$, $0.0684$, $0.0598$ and  $0.0598$, respectively. Applying Algorithm \ref{alg1} we conclude that the top algorithms are \textit{Rotation Forest}, \textit{Random Committee}, \textit{Decorate}, \textit{END} and \textit{Classification Via Regression} with estimated probabilities of  $0.1033$, $0.0689$, $0.0684$, $0.0536$ and $0.0534$, respectively. Importantly, $w^{loo}=0.48, 0.33, 0.19$. This suggests that the differences between the algorithms are relatively small. Our proposed scheme identifies almost the same top-performing algorithms as the MLE. However, both the order and the estimated probabilities vary. The minimum average rank criterion suggests that the top performing algorithms are \textit{Rotation Forest}, \textit{Classification Via Regression}, \textit{Random Committee}, \textit{Decorate} and \textit{Simple Logistic}. Notice that the minimum average rank of the best performing algorithm according to this criterion is $10.08$, which only emphasizes the noisy nature of this comparison. The Borda Count scheme suggests that the best performing algorithms are \textit{Rotation Forest},\textit{LibLINEAR}, \textit{END}, \textit{Decorate} and \textit{Classification Via Regression}. On the other hand, the Bradley-Terry results shows that the best performing algorithm is \textit{Rotation Forest}, followed by \textit{Classification Via Regression}, \textit{LibLINEAR},\textit{Random Committee} and  \textit{END}. Finally, to validate our results we again perform a $k$-fold cross validation as described above. Once again, our proposed scheme attains better results with $3.32$ compared to $3.34$ of the MLE. The corresponding p-value is $0.07$. Naturally, the difference here is less significant due to the noisy comparison. Further, the Borda Count attains an average error of $3.46$ while the Bradley-Terry model is even less competitive with an average error of $3.51$. Table \ref{tab:real_datasets_2} summerizes our results. 

\begin{table}[h]
\centering
\caption{Fern{\'a}ndez-Delgado et al comparative study \citep{fernandez2014we}. We report the three top-performing algorithms according to each scheme with a corresponding average cross-validation loss, as described in the text. The statistically significant best performing scheme (at level $0.1$) is \textbf{bolded}.}
\label{tab:real_datasets_2}
\begin{tabular}{llllr}
\toprule
\textbf{Scheme} & \textbf{First} & \textbf{Second} & \textbf{Third} &  \textbf{Average Loss} \\
\midrule
MLE & \textit{\begin{tabular}{l}Rotation\\Forest\end{tabular}}& \textit{\begin{tabular}{l}Random\\Committee\end{tabular}}& \textit{\begin{tabular}{l}END\end{tabular}}& $3.34$ \\
Bradley-Terry 
& \textit{\begin{tabular}{l}Rotation\\Forest\end{tabular}} & \textit{\begin{tabular}{l}Classification\\via Regression\end{tabular}} & \textit{\begin{tabular}{l}LibLINEAR\end{tabular}} &  $3.51$ \\
Borda Count & \textit{\begin{tabular}{l}Rotation\\Forest\end{tabular}} &\textit{\begin{tabular}{l}LibLINEAR\end{tabular}}& \textit{\begin{tabular}{l}END\end{tabular}}& $3.46$ \\
Our proposed method & \textit{\begin{tabular}{l}Rotation\\Forest\end{tabular}}& \textit{\begin{tabular}{l}Random\\Committee\end{tabular}}& \textit{\begin{tabular}{l}Decorate\end{tabular}}&\textbf{3.32} \\
\bottomrule
\end{tabular}
\end{table}

Finally, we turn to the comparative study of  Shmuel et al. \citep{shmuel2024comprehensive}. In this work, the authors introduce a comprehensive benchmark, aimed at better characterizing the types of datasets where deep learning models excel. They evaluate $111$ datasets with $20$ different models, including both regression and classification tasks. These datasets vary in scale and include both those with and without categorical variables. Applying the ML criterion, the top three algorithms are \textit{AutoGluon}, \textit{SVM} and \textit{ResNet} with corresponding estimated probabilities of $0.3514$, $0.0901$ and $0.0811$, respectively. Our proposed scheme suggests that the leading algorithms are  \textit{AutoGluon}, \textit{CatBoost} and \textit{LightGBM} with corresponding estimated probabilities of $0.2787$, $0.0814$ and $0.0780$, respectively. Interestingly, the minimum average rank criterion agrees with our proposed scheme and recommends the same top algorithms. The Borda Count criterion leads to \textit{AutoGluon}, \textit{ResNet} and \textit{LightGBM} as the top-performing models while Bradley-Terry suggests \textit{CatBoost}, \textit{AutoGluon} and \textit{ResNet}.
Evaluating our results using $k$-fold cross validation  as described above, we again observe a slight improvement with an average loss of $2.42$, compared to the alternative MLE with $2.45$, where the corresponding p-value is $0.01$. Once again, the Borda Count and the Bradley-Terry model demonstrate inferior results. The Borda Count attains an average loss of $2.59$ while the Bradley-Terry model achieves an average loss of $2.72$. Table \ref{tab:real_datasets_3} summarizes our results. 

\begin{table}[h]
\centering
\caption{Shmuel et al. study \cite{shmuel2024comprehensive}. We report the three top-performing algorithms according to each scheme with a corresponding average cross-validation loss, as described in the text. The statistically significant best performing scheme (at level $0.05$) is \textbf{bolded}.}
\label{tab:real_datasets_3}
\begin{tabular}{lrrrr}
\toprule
\textbf{Scheme} & \textbf{First} & \textbf{Second} & \textbf{Third} & \textbf{Average Loss} \\
\midrule
MLE & \textit{AutoGluon} & \textit{SVM} & \textit{ResNet} &  $2.45$ \\
Bradley-Terry & \textit{CatBoost} & \textit{AutoGluon} & \textit{ResNet} &  $2.72$ \\
Borda Count & \textit{AutoGluon} & \textit{ResNet} & \textit{LightGBM} &  $2.59$ \\
Our proposed method &  \textit{AutoGluon} & \textit{CatBoost} & \textit{LightGBM} &  \textbf{2.42} \\
\bottomrule
\end{tabular}
\end{table}

\section{Proofs}
\subsection{A proof for Theorem \ref{T1}}\label{proof1}
First, we have
\begin{align}
\mathbb{E}D_{\text{TV}}\left(p,\hat{p}^{w}\right)=&\mathbb{E}\sum_{j=1}^m|p_j-\hat{p}_j^w|=\mathbb{E}\sum_{j=1}^m\sqrt{\left(p_j-\hat{p}_j^w\right)^2}\leq\sum_{j=1}^m\sqrt{\mathbb{E}\left(p_j-\hat{p}_j^w\right)^2}
\end{align}
where the last inequality follows from Jensen inequality. Denote $q \in \Delta_m$ as a distribution vector which corresponds to the probability that an algorithm finishes second. That is $q_j$ is the probability that the $j^{th}$ algorithm is the second ranked algorithm over a given dataset. From the classical variance-bias decomposition, we attain
$$\mathbb{E}\left(p_j-\hat{p}_j^w\right)^2=\left(p_j-\mathbb{E}(\hat{p}_j^w)\right)^2+\text{var}\big(\hat{p}_j^w\big),$$
where 
$$\mathbb{E}\big(\hat{p}_j^w\big)=\mathbb{E}\big(wr_j^{(1)}+(1-w)r_j^{(2)}\big)/n=wp_j+(1-w)q_j$$
and 
\begin{align}
&\text{var}\big(\hat{p}_j^w\big)=\\\nonumber
&\frac{w^2}{n^2}\text{var}\big(r_j^{(1)}\big)+\frac{(1-w)^2}{n^2}\text{var}\big(r_j^{(2)}\big)+\frac{2w(1-w)}{n^2}\text{cov}\big(r_j^{(1)},r_j^{(2)}\big)=\\\nonumber
&w^2\frac{p_j(1-p_j)}{n}+(1-w)^2\frac{q_j(1-q_j)}{n}-2w(1-w)\frac{p_jq_j}{n}. 
\end{align}
Putting together the above, we obtain
\begin{align}
&\mathbb{E}\left(p_j-\hat{p}_j^w\right)^2=\\\nonumber
&p_j^2-2p_j(wp_j+(1-w)q_j)+(wp_j+(1-w)q_j)^2+\\\nonumber
&\frac{1}{n}\left(w^2p_j+(1-w)^2q_j-(wp_j+(1-w)q_j)^2\right)=\\\nonumber
&p_j^2-2p_j(wp_j+(1-w)q_j)+\left(1-\frac{1}{n}\right)(wp_j+(1-w)q_j)^2+\\\nonumber
&\frac{1}{n}\left(w^2p_j+(1-w)^2q_j\right)<\\\nonumber
&(p_j-wp_j-(1-w)q_j))^2+\frac{1}{n}\left(w^2p_j+(1-w)^2q_j\right)=\\\nonumber
&(1-w)^2(p_j-q_j)^2+\frac{1}{n}\left(w^2p_j+(1-w)^2q_j\right)
\end{align}
where the inequality follows from $1-1/n<1$. Hence, 
\begin{align}
&\mathbb{E}D_{\text{TV}}\left(p,\hat{p}^{w}\right)\leq\\\nonumber
& \sum_{j=1}^m\sqrt{(1-w)^2(p_j-q_j)^2+\frac{1}{n}\left(w^2p_j+(1-w)^2q_j\right)}\leq\\\nonumber
&\max_{P\in\Delta_{m!}} \sum_{j=1}^m\sqrt{(1-w)^2(p_j-q_j)^2+\frac{1}{n}\left(w^2p_j+(1-w)^2q_j\right)}.
\end{align}
Denote $t_j=(1-w)^2(p_j-q_j)^2+\frac{1}{n}\left(w^2p_j+(1-w)^2q_j\right).$ Notice we have that 
\begin{align}\label{const2}
    &0\leq t_j \leq (1-w)^2+\frac{1}{n}(w^2+(1-w)^2)\triangleq g_1(w)\\\label{const1}
    &0\leq \sum_{j=1}^mt_j\leq2(1-w)^2+\frac{1}{n}(w^2+(1-w)^2)\triangleq g_2(w)
\end{align}
where both upper bounds are attained for  $p_u=1$ and $q_v=1$ for some $u\neq v$. %This corresponds to case where $u$ always wins and $v$ is always the runner-up.
Putting it together, we would like to bound from above $\sum_{j=1}^n\sqrt{t_j}$ subject to (\ref{const1}) and (\ref{const2}). This is a concave maximization problem over a convex set, and the maximum is attain for $t_j^*=g_2(w)/m$. Hence, we get that
\begin{align}\label{ub_final}\nonumber
\mathbb{E}D_{\text{TV}}\left(p,\hat{p}^{w}\right)\leq \sqrt{m\cdot g_2(w)}=\sqrt{m}\sqrt{2(1-w)^2+\frac{1}{n}(w^2+(1-w)^2)}.
\end{align}
Notice that for $w=1$ we obtain  $\mathbb{E}D_{\text{TV}}\left(p,\hat{p}^{w}\right)\leq \sqrt{m/n}$ as desired (\ref{MLE minimax}).
Finally, we would like to minimize the above with respect to $w$. Simple derivation shows that $w^*=1-{1}/{(2n+2)}$. 

\subsection{A proof for Theorem \ref{T3}} \label{proof2}
First, we have
\begin{align}
        \mathbb{E}D_{\text{TV}}\left(p,\hat{p}^{w}\right)=\mathbb{E}\sum_{j=1}^m|p_j-\hat{p}_j^w|=\mathbb{E}\sum_{j=1}^m\sqrt{\left(p_j-\hat{p}_j^w\right)^2}\leq\sum_{j=1}^m\sqrt{\mathbb{E}\left(p_j-\hat{p}_j^w\right)^2}
\end{align}
where the last inequality follows from Jensen inequality. Denote $p^{(k)} \in \Delta_m$ as a distribution vector which corresponds to the probability that an algorithm finishes $k^{th}$ place. That is $p^{(k)}_j$ is the probability that the $j^{th}$ algorithm is the $k^{th}$ ranked algorithm over a future dataset. To avoid an overload of notation, we refer to $p^{(1)}$ as $p$ interchangeably. From the classical variance-bias decomposition, we attain
$$\mathbb{E}\left(p_j-\hat{p}_j^w\right)^2=\left(p_j-\mathbb{E}(\hat{p}_j^w)\right)^2+\text{var}(\hat{p}_j^w),$$
where 
$$\mathbb{E}(\hat{p}_j^w)=\frac{1}{n}\mathbb{E}\left(\sum_{k=1}^K w_k r_j^{(k)}\right)=\sum_{k=1}^Kw_k p^{(k)}_j$$
and 
\begin{align}
\text{var}(\hat{p}_j^w)=&\sum_{k=1}^K\frac{w_k^2}{n^2}\text{var}(r_j^{(k)})+\sum_{u \neq v}\frac{2w_uw_v}{n^2}\text{cov}(r_j^{(u)},r_j^{(v)})=\\\nonumber
&\sum_{k=1}^K w_k^2\frac{p^{(k)}_j(1-p^{(k)}_j)}{n}-2\sum_{u\neq v}w_uw_v\frac{p^{(u)}_jp^{(v)}_j}{n}. 
\end{align}
Putting together the above, we obtain
\begin{align}
\mathbb{E}\left(p_j-\hat{p}_j^w\right)^2=&\left((1-w_1)p_j-\sum_{k=2}^K w_kp^{(k)}_j\right)^2+\\\nonumber
&\frac{1}{n}\left(\sum_{k=1}^K w_k^2p^{(k)}_j(1-p^{(k)}_j)-2\sum_{u\neq v}w_uw_vp^{(u)}_jp^{(v)}_j\right)\leq\\\nonumber
&\left((1-w_1)p_j-\sum_{k=2}^K w_kp^{(k)}_j\right)^2+\frac{1}{n}\sum_{k=1}^K w_k^2p^{(k)}_j.
\end{align}
Hence, 
\begin{align}\label{T3_1}
\mathbb{E}D_{\text{TV}}\left(p,\hat{p}^{w}\right)\leq& \sum_{j=1}^m\sqrt{\left((1-w_1)p_j-\sum_{k=2}^K w_kp^{(k)}_j\right)^2+\frac{1}{n}\sum_{k=1}^K w_k^2p^{(k)}_j}.
\end{align}
We now bound (\ref{T3_1}) from above with respect to $p^{(1)},...,p^{(k)}$. 
Let $\mathcal{U}_m$ be a doubly stochastic matrix such that for every $U \in \mathcal{U}_m$ we have $\sum_{i=1}^m U_{ij}=\sum_{j=1}^m U_{ij}=1$ and $U_{ij}\geq 0$. We use $U \in \mathcal{U}_m$ to represent $p^{(1)},...,p^{(m)}$. Specifically, recall that $p^{(k)}_j$ is the probability that the $j^{th}$ algorithms is the $k^{th}$ ranked algorithm.  This means that 
$\sum_{k=1}^m p^{(k)}_j=1$ for every $j=1,...,m$ and $\sum_{j=1}^m p^{(k)}_j=1$ for every $k=1,...,m$. Therefore, we represent $p^{(1)},...,p^{(m)}$ with $U$, such that the $k^{th}$ column of $U$ corresponds to $p^{(k)}$. That is, $U=[p^{(1)},...,p^{(m)}]$. Yet, our optimization problem is only with respect to $p^{(1)},...,p^{(K)}$. Hence, we define a collection $\mathcal{V}_{K,m}$ is the following manner. Let $U\in \mathcal{U}_{m}$ be a doubly stochastic matrix. Define $V$ as a subset of $K$ columns of $U$. Let $\mathcal{V}_{K,m}$ be the collection of all possible $V$ matrices. Naturally, for every $V\in  \mathcal{V}_{K,m}$ we have $\sum_{i=1}^mV_{ij}=1$ for every $j=1,...,K$ but $\sum_{j=1}^K V_{ij}\leq 1$ for every $i=1,...,m$. Therefore, we represent $p^{(1)},...,p^{(K)}$ as the columns of $V$ such that  $V=[p^{(1)},...,p^{(K)}]\in \mathcal{V}_{K,m} $ and obtain
\begin{align}\label{T3_2}
\mathbb{E}D_{\text{TV}}\left(p,\hat{p}^{w}\right)\leq\max_{p^{(1)},...,p^{(K)}\in\mathcal{V}_{K,m}} \sum_{j=1}^mf\left(p^{(1)}_j,...,p^{(k)}_j\right).
\end{align}
where 
$$f\left(p^{(1)}_j,...,p^{(k)}_j\right)=\sqrt{\left((1-w_1)p_j-\sum_{k=2}^K w_kp^{(k)}_j\right)^2+\frac{1}{n}\sum_{k=1}^K w_k^2p^{(k)}_j}.$$ 
Let us now relax the maximization above (\ref{T3_2}) by considering $p^{(1)}$ and $p^{(2)},...,p^{(K)}$  separately. That is, 
\begin{align}
&\max_{p^{(1)},...,p^{(K)}\in\mathcal{V}_{K,m}} \sum_{j=1}^mf\left(p^{(1)}_j,...,p^{(k)}_j\right)\leq \max_{p^{(1)}\in\Delta_{m}}\left(\max_{p^{(2)}...p^{(K)}\in \mathcal{V}_{K-1,m}} \sum_{j=1}^mf\left(p^{(1)}_j,...,p^{(K)}_j\right)\right)\nonumber
\end{align}
In words, we maximize the objective over a marginal $p^{(1)}$ which does not depend on the choice of the remaining distributions. Let us first consider the maximization over $p^{(2)},...,p^{(K)}$. Proposition \ref{prop2} below shows that the maximum of $\sum_jf\left(p^{(1)}_j,...,p^{(K)}_j\right)$ over $p^{(2)},...,p^{(K)}\in \mathcal{V}_{K-1,m}$ is attained on the vertices of the set. That is, for degenerate distributions $p^{(k)}$, $k=2,...,K$. 

Formally, define $\mathcal{T}_m$ as a collection of permutation matrices. That is, every $T\in \mathcal{T}_m$ is a doubly stochastic matrix and $T_{ij}\in\{0,1\}$. Next, define $S$ as a \textit{partial permutation matrix} of $K$ columns. That is, for every $T\in \mathcal{T}_m$ we define $S$ as a subset of $K$ columns of $T$. Let $\mathcal{S}_{K,m}$ be the collection of all partial permutation matrices $S$. For example, assume $m=3$ and $K=2$. Then, 
$\mathcal{T}_m$ is the collection
\begin{align}\nonumber
\left(\begin{array}{ccc} 1 & 0 &0\\ 0 & 1 & 0 \\0 & 0& 1\end{array}\right),
\left(\begin{array}{ccc} 1 & 0 &0\\ 0 & 0 & 1 \\0 &1 & 0\end{array}\right),
\left(\begin{array}{ccc} 0 & 1 &0\\ 1 & 0 & 0 \\0 & 0& 1\end{array}\right), 
\left(\begin{array}{ccc} 0 & 0 &1\\ 1 & 0 & 0 \\0 & 1& 0\end{array}\right), 
\left(\begin{array}{ccc} 0 & 1 &0\\ 0 & 0 & 1 \\1 & 0& 0\end{array}\right), 
\left(\begin{array}{ccc} 0 & 0 &1\\ 0 & 1 & 0 \\1 & 0& 0\end{array}\right) 
\end{align}
and $\mathcal{S}_{K,m}$ is simply
%and S∈SK,mS\in \mathcal{S}_{K,m} corresponds the first two columns of  T∈TmT\in\mathcal{T}_m,
\begin{align}\nonumber
\left(\begin{array}{cc} 1 & 0\\ 0 & 1\\0 & 0\end{array}\right),
\left(\begin{array}{cc} 1 & 0\\ 0 & 0 \\0 &1 \end{array}\right),
\left(\begin{array}{cc} 0 & 1\\ 1 & 0  \\0 & 0\end{array}\right), 
\left(\begin{array}{cc} 0 & 0\\ 1 & 0  \\0 & 1\end{array}\right), 
\left(\begin{array}{cc} 0 & 1\\ 0 & 0  \\1 & 0\end{array}\right), 
\left(\begin{array}{cc} 0 & 0\\ 0 & 1  \\1 & 0\end{array}\right). 
\end{align}
Notice that $|\mathcal{S}_{K,m}|=|\mathcal{T}_m|$. Further, every $S \in \mathcal{S}_{K,m}$ satisfies the following:
\begin{itemize}
    \item $\sum_i S_{ij}=1$ for every $j=1,...,m$.
    \item $\sum_j S_{ij} \in \{0,1\}$ for every $i=1,...,K$.
    \item $S_{ij}\in\{0,1\}$ for every $j=1,...,m$ and $i=1,...,K$.
\end{itemize}
In words, every columns in $S\in S_{K,m}$ corresponds to a degenerate distribution and no two columns are identical.   

\begin{proposition}\label{prop2}
    Let $S_{{K-1},m}$ be a collection of  partial permutation matrices. Assume that $w_2\geq w_3 \geq ...\geq w_K$. Then, 
   $$\max_{p^{(2)}...p^{(K)}\in \mathcal{V}_{K-1,m}} \sum_{j=1}^mf\left(p^{(1)}_j,...,p^{(K)}_j\right)\leq\max_{p^{(2)}...p^{(K)}\in \mathcal{S}_{K-1,m}} \sum_{j=1}^mg\left(p^{(1)}_j,...,p^{(K)}_j\right)$$
   where 
   $$g\left(p^{(1)}_j,...,p^{(K)}_j\right)=
   \sqrt{\left((1-w_1)p^{(1)}_j-\sum_{k=2}^K w_kp^{(k)}_j\right)^2+\frac{1}{n}\left( w_1^2p_j^{(1)}+\frac{1}{n} w_2^2\right)}$$
\end{proposition}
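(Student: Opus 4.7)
The plan is to establish the stated inequality in two independent steps. First, I will prove the pointwise bound $f\le g$ on $\mathcal{V}_{K-1,m}$, which immediately gives $\max_{\mathcal{V}}\sum_j f \le \max_{\mathcal{V}}\sum_j g$. Second, I will show that $\sum_j g$ is a convex function of $(p^{(2)},\ldots,p^{(K)})$ (with $p^{(1)}$ held fixed) and that the extreme points of the polytope $\mathcal{V}_{K-1,m}$ are exactly the partial permutation matrices $\mathcal{S}_{K-1,m}$. Since a convex objective on a convex compact polytope attains its maximum at an extreme point, $\max_{\mathcal{V}}\sum_j g = \max_{\mathcal{S}}\sum_j g$, and the two steps combine to give the claim.

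For the pointwise bound, $f$ and $g$ differ only in the term inside the square root that plays the role of a variance: in $f$ it is $\tfrac{1}{n}\sum_{k=1}^K w_k^2 p^{(k)}_j$, while in $g$ it is $\tfrac{1}{n}(w_1^2 p^{(1)}_j+w_2^2)$. Invoking the monotonicity $w_2\ge w_3\ge\cdots\ge w_K$ together with $\sum_{k=2}^K p^{(k)}_j \le \sum_{k=1}^m p^{(k)}_j = 1$, I obtain
\[
\sum_{k=1}^K w_k^2 p^{(k)}_j \;\le\; w_1^2 p^{(1)}_j + w_2^2\sum_{k=2}^K p^{(k)}_j \;\le\; w_1^2 p^{(1)}_j + w_2^2,
\]
and monotonicity of the square root gives $f\le g$ termwise in $j$.

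For the convexity step, fix $p^{(1)}$ and view $\sum_j g$ as a function of the $(K-1)\times m$ matrix with columns $p^{(2)},\ldots,p^{(K)}$. For each $j$, $g$ is the composition of the convex univariate map $u\mapsto\sqrt{u^2+b_j}$, with $b_j=\tfrac{1}{n}(w_1^2 p^{(1)}_j+w_2^2)\ge 0$, and the affine map $(p^{(2)}_j,\ldots,p^{(K)}_j)\mapsto (1-w_1)p^{(1)}_j-\sum_{k=2}^K w_k p^{(k)}_j$; hence $g$ is convex in its free arguments. Summation over $j$ preserves convexity (the summands depend on disjoint blocks of variables), so the maximum over the convex polytope $\mathcal{V}_{K-1,m}$ is attained at an extreme point.

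The most delicate step is the extreme-point identification $\mathrm{ext}(\mathcal{V}_{K-1,m})=\mathcal{S}_{K-1,m}$. I will argue via Birkhoff's theorem: any $V\in\mathcal{V}_{K-1,m}$ can be completed to an $m\times m$ doubly stochastic matrix $U$ by appending $m-(K-1)$ slack columns that raise each row sum to one. Birkhoff decomposes $U$ as a convex combination of $m\times m$ permutation matrices, and restricting each such matrix to its first $K-1$ columns exhibits $V$ as a convex combination of partial permutation matrices from $\mathcal{S}_{K-1,m}$, so $\mathcal{V}_{K-1,m}=\mathrm{conv}(\mathcal{S}_{K-1,m})$ and hence $\mathrm{ext}(\mathcal{V}_{K-1,m})\subseteq \mathcal{S}_{K-1,m}$. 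Conversely, any $S\in\mathcal{S}_{K-1,m}$ has $\{0,1\}$ entries and so cannot be written as a nontrivial convex combination of distinct points of $\mathcal{V}_{K-1,m}$, giving the reverse inclusion. Combining the two steps delivers the bound stated in the proposition.
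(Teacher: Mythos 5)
Your proposal is correct and follows essentially the same route as the paper's proof: the termwise bound $f\le g$ via $w_2\ge\cdots\ge w_K$ and $\sum_{k\ge 2}p^{(k)}_j\le 1$, convexity of $\sum_j g$ in $(p^{(2)},\ldots,p^{(K)})$, and the Birkhoff--von Neumann decomposition to pass from $\mathcal{V}_{K-1,m}$ to the partial permutation matrices $\mathcal{S}_{K-1,m}$. The only cosmetic differences are that you establish convexity by composing $u\mapsto\sqrt{u^2+b_j}$ with an affine map rather than checking the Hessian, and you add an (unneeded for the stated inequality) converse extreme-point identification.
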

\begin{proof}
    First, we have that \begin{align}
    f\left(p^{(1)}_j,...,p^{(K)}_j\right)=&\sqrt{\left((1-w_1)p_j^{(1)}-\sum_{k=2}^K w_kp^{(k)}_j\right)^2+\frac{1}{n}\sum_{k=1}^K w_k^2p^{(k)}_j}\leq\\\nonumber
    &\sqrt{\left((1-w_1)p^{(1)}_j-\sum_{k=2}^K w_kp^{(k)}_j\right)^2+\frac{1}{n}\left( w_1^2p_j^{(1)}+ w_2^2\right)}\triangleq\\\nonumber
    &g\left(p^{(1)}_j,...,p^{(K)}_j\right).
    \end{align} 
    where the inequality is due to $$\sum_{k=1}^K w_k^2p^{(k)}_j=w_1^2p^{(1)}_j+\sum_{k=2}^K w_k^2p^{(k)}_j\leq w_1^2p^{(1)}_j +\max_{k=2,...,K}w_k^2=w_1^2p^{(1)}_j +w_2^2$$
    as $w_2\geq w_3\geq...\geq w_K$ and $\sum_{k=2}^K p_j^{(k)}\leq1$. 
    Next, it is immediate to show that the Hessian of $g\left(p^{(1)}_j,...,p^{(K)}_j\right)$ with respect to $p^{(2)}_j,...,p^{(K)}_j$ is positive semi-definite. This means that $g\left(p^{(1)}_j,...,p^{(K)}_j\right)$ is convex with respect to $p^{(2)}_j,...,p^{(K)}_j$. Now, define $\tilde{V}=[p^{(2)},...,p^{(K)}]$ and $G(p^{(1)},\tilde{V})=\sum_{j=1}^n g\left(p^{(1)}_j,...,p^{(K)}_j\right)$. Then, $G(p^{(1)},\tilde{V})$ is convex in $\tilde{V}$, as a sum of convex functions. Formally, let $\tilde{V}_1=[p^{(2)},...,p^{(K)}]$ and $\tilde{V}_2=[q^{(2)},...,q^{(K)}]$ such that $\tilde{V}_1,\tilde{V}_2\in \mathcal{V}_{K-1,m}$. Then, for every $\lambda \in [0,1]$,
    \begin{align}
        &G(p^{(1)},\lambda \tilde{V}_1+(1-\lambda) \tilde{V}_2)=\\\nonumber
        &\sum_{j=1}^m \sqrt{\left((1-w_1)p_j^{(1)}-\sum_{k=2}^K w_k\left(\lambda p^{(k)}_j+(1-\lambda)q^{(k)}_j\right)\right)^2+\frac{1}{n}\left( w_1^2p_j^{(1)}+ w_2^2\right)}\leq\\\nonumber
        &\lambda\sum_{j=1}^m \sqrt{\left((1-w_1)p_j^{(1)}-\sum_{k=2}^K w_k p^{(k)}_j\right)^2+\frac{1}{n}\left( w_1^2p_j^{(1)}+ w_2^2\right)}+\\\nonumber
        &(1-\lambda)\sum_{j=1}^m \sqrt{\left((1-w_1)p_j^{(1)}-\sum_{k=2}^K w_k q^{(k)}_j\right)^2+\frac{1}{n}\left( w_1^2p_j^{(1)}+ w_2^2\right)}=\\\nonumber
        &\lambda G(p^{(1)}, \tilde{V}_1)+(1-\lambda)G(p^{(1)}, \tilde{V}_2)
    \end{align}
where the inequality follows from the convexity of $g\left(p^{(2)}_j,...,p^{(K)}_j\right)$.
Finally, Birkhoff–von Neumann theorem states that every doubly stochastic real matrix $U \in \mathcal{U}_m$ is a convex combination of permutation matrices, $T\in \mathcal{T}_m$ with the permutation matrices $T$ being precisely the extreme points (the vertices) of the Birkhoff polytope $\mathcal{T}_m$ \citep{jurkat1967term}. In other words, every doubly stochastic matrix $U \in \mathcal{U}_m$ may be represented as a convex combination of the permutation matrices $T\in\mathcal{T}_m$.
This means that every $\tilde{V}\in \mathcal{V}_{K-1,m}$ is also a convex combination of partial permutation matrices $S\in \mathcal{S}_{K-1,m}$, following the construction of $\tilde{V}$. That is, assume that $\tilde{V}$ is defined by $K-1$ columns of a doubly stochastic matrix $U$. By the Birkhoff–von Neumann theorem, $U=\sum_i \lambda_i T_i$ is a convex combination of $T_i\in \mathcal{T}_m$. This means that $\tilde{V}=\sum_i \lambda_i S_i$ is a convex combination of the corresponding $S_i \in \mathcal{S}_{K-1,m}$. By the convexity of $G(p^{(1)},\tilde{V})$ with respect to $\tilde{V}$ we attain
$$\max_{p^{(2)}...p^{(K)}\in \mathcal{V}_{K-1,m}} \sum_{j=1}^mf\left(p^{(1)}_j,...,p^{(K)}_j\right)\leq\max_{p^{(2)}...p^{(K)}\in \mathcal{S}_{K-1,m}} \sum_{j=1}^mg\left(p^{(1)}_j,...,p^{(K)}_j\right)$$
\end{proof}
Plugging Proposition \ref{prop2} to (\ref{T3_2}) yields 
\begin{align}\label{T3_5}
\mathbb{E}D_{\text{TV}}\left(p,\hat{p}^{w}\right)\leq\max_{p^{(1)}\in\Delta_{m}}\left(\max_{p^{(2)}...p^{(K)}\in \mathcal{S}_{K-1,m}} \sum_{j=1}^mg\left(p^{(1)}_j,...,p^{(K)}_j\right)\right)
\end{align}
Let us now focus on the maximization with respect to $p^{(1)}$. Proposition \ref{prop3} below shows that under the stated condition, $\sum_{j=1}^m g\left(p^{(1)}_j,...,p^{(K)}_j\right)$ is concave in $p^{(1)}$ for every $p^{(2)},...,p^{(K)} \in \mathcal{S}_{K-1,m}$.

\begin{proposition}\label{prop3}
    Assume $w_1\geq \frac{8n-\sqrt{8n}}{8n-1}.$ Then, $\sum_{j=1}^m g\left(p^{(1)}_j,...,p^{(K)}_j\right)$ is concave in $p^{(1)}$ for every $p^{(2)},...,p^{(K)} \in \mathcal{S}_{K-1,m}$.
\end{proposition}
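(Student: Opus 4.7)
The plan is to exploit separability: for any fixed $p^{(2)},\ldots,p^{(K)}\in\mathcal{S}_{K-1,m}$, each summand $g(p^{(1)}_j,\ldots,p^{(K)}_j)$ depends on $p^{(1)}$ only through its $j$-th coordinate. The Hessian of a separable sum $\sum_j \tilde g_j(p^{(1)}_j)$ is diagonal, so concavity of the whole sum in $p^{(1)}$ is equivalent to concavity of each univariate piece $\tilde g_j(x)=\sqrt{h_j(x)}$, where
\[
h_j(x)=\bigl((1-w_1)x-b_j\bigr)^2+\tfrac{1}{n}\bigl(w_1^2 x+w_2^2\bigr),\qquad b_j:=\sum_{k=2}^K w_k p^{(k)}_j.
\]
Since each column of a partial permutation matrix is a standard basis vector and no two columns coincide, for every $j$ one has $b_j\in\{0,w_2,w_3,\ldots,w_K\}$.

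Next I would invoke the standard identity $\tilde g_j''(x)=\bigl(2 h_j h_j''-(h_j')^2\bigr)/(4 h_j^{3/2})$, so concavity reduces to $(h_j')^2\ge 2 h_j h_j''$. Writing $h_j(x)=a_2 x^2+a_1 x+a_0$ with
\[
a_2=(1-w_1)^2,\qquad a_1=\tfrac{w_1^2}{n}-2(1-w_1)b_j,\qquad a_0=b_j^2+\tfrac{w_2^2}{n},
\]
direct expansion shows that $(h_j')^2-2 h_j h_j''=a_1^2-4 a_2 a_0$, a constant in $x$ (this is essentially the discriminant of $h_j$). The $b_j^2$ terms cancel after simplification, and the concavity requirement becomes
\[
w_1^4\;\ge\;4n(1-w_1)\bigl[w_1^2 b_j+(1-w_1)w_2^2\bigr].
\]

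The right-hand side is monotone increasing in $b_j$ and in $w_2$, so the worst case occurs at $b_j=w_2=1-w_1$, the largest admissible $w_2$ under $\sum_k w_k=1$ and $w_k\ge 0$. Plugging these values in and dividing by $(1-w_1)^4$, the condition collapses to $v^2\ge 4n(v+1)$, where $v:=\bigl(w_1/(1-w_1)\bigr)^2$.

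To finish, I would apply the crude bound $v+1\le 2v$, valid whenever $v\ge 1$. This turns the inequality into the clean sufficient form $v\ge 8n$, i.e.\ $w_1/(1-w_1)\ge\sqrt{8n}$; rationalising yields exactly the stated threshold $w_1\ge (8n-\sqrt{8n})/(8n-1)$. Since this threshold exceeds $1/2$ for every $n\ge 1$, the inequality $v\ge 8n\ge 1$ holds under the hypothesis, so the relaxation $v+1\le 2v$ is justified without circularity. The main subtlety is not the second-derivative computation but recognising that the awkward quartic $v^2\ge 4n(v+1)$ admits the particularly clean sufficient condition $v\ge 8n$; the stated threshold is \emph{not} the tight concavity bound, but rather the sharpest one expressible in closed form via this relaxation.
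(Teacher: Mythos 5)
Your proposal is correct and follows essentially the same route as the paper: reduce to per-coordinate concavity of $\sqrt{\text{quadratic}}$, observe that the second-derivative condition is the constant $a_1^2-4a_2a_0\ge 0$, bound $b_j\le w_2\le 1-w_1$, and apply the relaxation $w_1^2+(1-w_1)^2\le 2w_1^2$ (your $v+1\le 2v$) before solving for the threshold $w_1\ge(8n-\sqrt{8n})/(8n-1)$. The only differences are presentational (the explicit discriminant framing and the substitution $v=(w_1/(1-w_1))^2$), and your closing remark that the threshold is a clean sufficient condition rather than the tight one is consistent with the paper's argument.
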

\begin{proof}
    Define $z(t)=\sqrt{(at-b)^2+ct+d}$. The function $z(t)$ is concave in $t$ if ${d^2z(t)}/{dt^2}\leq 0$. Hence, we require that 
    $$2a^2\left((at-b)^2+ct+d\right)-\frac{1}{2}(2a(at-b)+c)^2\leq 0,$$
    which holds if and only if $2abc+2a^2d-c^2/2\leq0.$
    Considering $$g\left(p^{(1)}_j,...,p^{(K)}_j\right)=\sqrt{\left((1-w_1)p^{(1)}_j-\sum_{k=2}^K w_kp^{(k)}_j\right)^2+\frac{1}{n}\left( w_1^2p_j^{(1)}+ w_2^2\right)}$$ with respect to $p_j^{(1)}$ we have that $a=(1-w_1)$, $b=\sum_{k=2}^Kw_kp_j^{(k)}$, $c=w_1^2/n$ and $d=w_2^2/n$.
    Therefore, we require \begin{align}\label{prop4_1}
    2(1-w_1)\frac{w_1^2}{n}\sum_{k=2}^Kw_kp_j^{(k)}+2(1-w_1)^2\frac{w_2^2}{n}-\frac{w_1^4}{2n^2}\leq 0
    \end{align}
    for the desired concavity of $\sum_{j=1}^m g\left(p^{(1)}_j,...,p^{(K)}_j\right)$. 
    Notice that $w_k\leq w_2 \leq 1-w_1$ for every $k=2,...,K$, where the last inequality follows from $\sum_{k=1}^Kw_k=1$. Further, $\sum_{k=2}^Kw_kp_j^{(k)}\leq w_2$ since every $p^{(2)},....,p^{(K)}\in S_{K-1,m}$ satisfies $p^{(k)}_j\in\{0,1\}$ and $\sum_{k=2}^Kp^{(k)}_j\leq 1$. Therefore, it is enough to show that 
    \begin{align}\label{prop4_2}
    \frac{2(1-w_1)^2w_1^2}{n}+\frac{2(1-w_1)^4}{n}-\frac{w_1^4}{2n^2}\leq \frac{4(1-w_1)^2w_1^2}{n}-\frac{w_1^4}{2n^2}\leq 0.
    \end{align}
    Solving the above quadratic inequality yields $w_1\geq \frac{8n-\sqrt{8n}}{8n-1}.$ 
   
\end{proof}

Let us now apply Proposition \ref{prop3} to (\ref{T3_5}). Denote 
\begin{align}
q^{(2)},...,q^{(K)}=\underset{p^{(2)}...p^{(K)}\in \mathcal{S}_{K-1,m}}{\text{argmax}} \sum_{j=1}^mg\left(p^{(1)}_j,...,p^{(K)}_j\right).
\end{align}
Recall that $q^{(k)}_j\in\{0,1\}$ for all $j=1,...,m$ and $k=2,..,K$. In addition, $\sum_{k=2}^Kq^{(k)}_j \in \{0,1\}$ for every $j=1,...,m$  and $\sum_{j=1}^m q^{(k)}_j= 1$ for every $k=2,..,K$. Denote by $\mathcal{J}$ the collection of $j$ indices for which $\sum_{k=2}^Kq^{(k)}_j=1$. Notice that only a single element in this sum equals one, while the others are zero. We denote the
index of this element by $k_j$. That is, for every $j \in \mathcal{J}$, we have $q^{(k_j)}_j=1$ while $q^{(k)}_j=0$ for every $k\neq k_j$. Therefore,

\begin{align}\nonumber
    \sum_{j=1}^mg\left(p^{(1)}_j,q^{(2)}_j,...,q^{(K)}_j\right)=&\sum_{j\in \mathcal{J}}g\left(p^{(1)}_j,q^{(2)}_j...,q^{(K)}_j\right)+\sum_{j\notin \mathcal{J}}g\left(p^{(1)}_j,q^{(2)}_j,...,q^{(K)}_j\right)=\\\nonumber
    &\sum_{j\in \mathcal{J}} \sqrt{\left((1-w_1)p^{(1)}_j-w_{k_j}\right)^2+\frac{1}{n}\left( w_1^2p_j^{(1)}+ w_2^2\right)}+\\\nonumber
    &\sum_{j\notin \mathcal{J}} \sqrt{(1-w_1)^2\left(p^{(1)}_j\right)^2+\frac{1}{n}\left( w_1^2p_j^{(1)}+ w_2^2\right)}.
\end{align}
Let us first focus on the second summation. Notice that all summands are identical functions of $p_j^{(1)}$. In addition, the sum is concave with respect to $p_j^{(1)}$, as suggested in Proposition \ref{prop3}. 
Denote $\pi=\sum_{j\notin \mathcal{J}} p_j^{(1)}$. By the concavity and the symmetry of the second summand, we have that
\begin{align}\label{asda}
    &\sum_{j\notin \mathcal{J}} \sqrt{(1-w_1)^2\left(p^{(1)}_j\right)^2+\frac{1}{n}\left( w_1^2p_j^{(1)}+ w_2^2\right)}\leq\\\nonumber
    &\sum_{j\notin \mathcal{J}} \sqrt{(1-w_1)^2\left(\frac{\pi}{m-|\mathcal{J}|}\right)^2+\frac{1}{n}\left( w_1^2\frac{\pi}{m-|\mathcal{J}|}+ w_2^2\right)}=\\\nonumber
    &\sqrt{(1-w_1)^2\pi^2+\frac{(m-|\mathcal{J}|)}{n}\left( w_1^2\pi+ w_2^2(m-|\mathcal{J}|)\right)}.
\end{align}
That is, a uniform distribution (over a total mass of $\pi$) maximizes the summation above. For simplicity of presentation (and without loss of generality), we denote $j\in\mathcal{J}$ as $j=1,...,J$ where $J=|\mathcal{J}|$. Notice that from maximization considerations, $J=K-1$. That is, maximizing $\sum_{j=1}^mg\left(p^{(1)}_j,p^{(2)}_j,...,p^{(K)}_j\right)$ over $\tilde{V}=[p^{(2)},...,p^{(K)}] \in \mathcal{S}_{K-1,m}$ necessarily results in a maximal possible number of rows in $\tilde{V}$ for which the sum of the entire row is one. 
Hence, we have that 
\begin{align}\label{almost_done}
&\max_{p^{(1)}_1,...,p^{(1)}_{K-1}}\sum_{j=1}^mg\left(p^{(1)}_j,q^{(2)}_j,...,q^{(K)}_j\right)=\\\nonumber
&\max_{p^{(1)}_1,...,p^{(1)}_{K-1}} \sum_{j=1}^{K-1} \sqrt{\left((1-w_1)p^{(1)}_j-w_{k_j}\right)^2+\frac{1}{n}\left( w_1^2p_j^{(1)}+ w_2^2\right)}+\\\nonumber
    &\quad\quad\quad\quad\sqrt{(1-w_1)^2\pi^2+\frac{(m-K+1)}{n}\left( w_1^2\pi+ w_2^2(m-K+1)\right)}
\end{align}
where $\pi=1-\sum_{j'=1}^{K-1}p_{j'}^{(1)}$. Importantly, (\ref{almost_done}) is a concave maximization problem over a simplex of $K$ variables, $p^{(1)}_1,...,p^{(1)}_{K-1}, \pi$, due to Proposition \ref{prop3}.

Finally, notice that the (\ref{almost_done}) depends on   $q^{(2)},...,q^{(K)}\in\mathcal{S}_{K-1,m}$ only through $w_{k_j}$. That is, every choice of $p^{(2)},...,p^{(K)}\in\mathcal{S}_{K-1,m}$ corresponds to a different set of $w_{k_j}$. However, $p^{(2)},...,p^{(K)}\in\mathcal{S}_{K-1,m}$ are degenerate and unique distributions. This means that $w_{k_j}$ are unique values from the given $w_2,...,w_K$. In other words, the choice of $p^{(2)},...,p^{(K)}\in\mathcal{S}_{K-1,m}$ only dictates a bijection between $\{w_{k_j}\}_{j=1}^{K-1}$ and $\{w_j\}_{j=2}^{K}$.
Therefore, every choice $p^{(2)},...,p^{(K)}\in\mathcal{S}_{K-1,m}$ leads to exactly the same maximization problem over $p^{(1)}_1,...,p^{(1)}_{K-1}$. For simplicity, and without loss of generality, we choose $w_{k_j}=w_{j+1}$ for $j=1,...,K-1$ which concludes the proof.

\section{Conclusions}

Given a set of algorithms and a benchmark of datasets, it is not entirely clear how to identify the best-performing algorithm or even how to establish a meaningful ranking among them. In this work we claim that a natural evaluation criterion is the probability that an algorithm will win (rank highest) on a future, unseen, dataset.  We introduce a novel conceptual framework for estimating this win probability, which generalizes the ML approach. Specifically, instead of simply counting the number of past wins, we also account for the number of times each algorithm finished second, third and so forth. This way, we provide a ``natural'' weighting for the performance of the algorithms on the benchmark set. We present two evaluation schemes within this framework. The first involves data-independent weights, predetermined before the evaluation begins, which are optimized to minimize worst-case risk. The second is a more adaptive, data-dependent approach based on a leave-one-out strategy, which shows significant improvements over existing alternatives. We validate both schemes through synthetic and real-world experiments.

Our analysis yields several important insights for comparative studies. First,  introducing too many competing algorithms is counter-productive. Specifically, by doing so we inherently add unnecessary noise to the rankings, which makes it more difficult to conclude on the winner. We advocate for the exclusion of algorithms that are not serious contenders, as their inclusion merely inflates the scope without improving the quality of the evaluation. Second, we argue that focusing solely on winners is insufficient. Lower-ranked placements contain valuable information, and relying on the MLE is suboptimal. However, our method converges to the MLE when the number of datasets is large or when there are highly dominant algorithms, in which case win-counting suffices.

Our proposed framework is not limited to algorithm comparison. It generalizes to any scenario involving repeated competitions among multiple entities. For instance, consider a horse race with $m$ horses, where past performance data is available and spectators aim to predict the most likely winner of a future race. Here, the horses correspond to the algorithms and their performance in past races are their benchmark results. Naturally, the goal is to determine the horse that is most likely to win a future race. We consider such applications and their natural adaptions for our future work.

%%
%% The acknowledgments section is defined using the "acks" environment
%% (and NOT an unnumbered section). This ensures the proper
%% identification of the section in the article metadata, and the
%% consistent spelling of the heading.
\section*{Acknowledgements}
A.P. is supported in part by the Israel Science Foundation
(grant No. 963/21). To author would like to thank  Fern{\'a}ndez-Delgado  \citep{fernandez2014we}, Shmuel \citep{shmuel2024comprehensive} and McElfresh \citep{mcelfresh2024neural} for providing their raw experimental data for this study. 

\bibliographystyle{plain}
\bibliography{bibi}  
\end{document}